\newcommand{\lr}[1]{\left (#1\right)}
\newcommand{\lrc}[1]{\left \{#1\right\}}
\newcommand{\lrs}[1]{\left [#1 \right]}
\newcommand{\R}{\mathbb R}
\NewDocumentCommand{\E}{o}{\mathbb E\IfValueT{#1}{\lrs{#1}}}
\NewDocumentCommand{\1}{o}{\mathds 1{\IfValueT{#1}{\lr{#1}}}}
\let\P\undefined
\NewDocumentCommand{\P}{o}{\mathbb P{\IfValueT{#1}{\lr{#1}}}}
\newcommand{\PP}{\mathcal P}
\newcommand{\XX}{\mathcal X}
\newcommand{\YY}{\mathcal Y}
\newcommand{\DD}{\mathcal D}
\newcommand{\ZZ}{\mathcal Z}
\newcommand{\Utrain}{U^{\mathrm{train}}}
\newcommand{\Uval}{U^{\mathrm{val}}}
\newcommand{\nval}{n^{\mathrm{val}}}
\newcommand{\HH}{\mathcal H}
\DeclareMathOperator{\KL}{KL}
\DeclareMathOperator{\kl}{kl}
\DeclareMathOperator{\MV}{MV}
\DeclareMathOperator{\B}{B}
\newcommand{\Ex}{\mathcal{E}}
\DeclareMathOperator{\V}{Var}
\NewDocumentCommand{\Var}{o}{\V\IfValueT{#1}{\lrs{#1}}}
\newtheorem{theorem}{Theorem}%[chapter]
\theoremstyle{definition}
\newtheorem*{example*}{Example}
\crefname{equation}{}{}
\title{Recursive PAC-Bayes: A Frequentist Approach to Sequential Prior Updates}
\title{Recursive PAC-Bayes: A Frequentist Approach to Sequential Prior Updates with No Information Loss}
\author{%
  % David S.~Hippocampus\thanks{Use footnote for providing further information
  %   about author (webpage, alternative address)---\emph{not} for acknowledging
  %   funding agencies.} \\
  % Department of Computer Science\\
  % Cranberry-Lemon University\\
  % Pittsburgh, PA 15213 \\
  % \texttt{hippo@cs.cranberry-lemon.edu} \\
  % examples of more authors
  Yi-Shan Wu \\
  University of South Denmark \\
  \texttt{yswu@imada.sdu.dk} \\
  \And
  Yijie Zhang \\
  University of Copenhagen \& Novo Nordisk A/S \\
  \texttt{yizh@di.ku.dk} \\
  \AND
  Badr-Eddine Ch{\'e}rief-Abdellatif \\
   CNRS, LPSM, Sorbonne Universit{\'e}, Universit{\'e} Paris Cit{\'e} \\
  \texttt{badr-eddine.cherief-abdellatif@cnrs.fr} \\
  \And
  Yevgeny Seldin \\
  University of Copenhagen \\
  \texttt{seldin@di.ku.dk} \\
  % \And
  % Coauthor \\
  % Affiliation \\
  % Address \\
  % \texttt{email} \\
}
\date{}
\begin{document}

\maketitle

\begin{abstract}
    PAC-Bayesian analysis is a frequentist framework for incorporating prior knowledge into learning. It was inspired by Bayesian learning, which allows sequential data processing and naturally turns posteriors from one processing step into priors for the next. However, despite two and a half decades of research, the ability to update priors sequentially without losing confidence information along the way remained elusive for PAC-Bayes. While PAC-Bayes allows construction of data-informed priors, the final confidence intervals depend only on the number of points that were not used for the construction of the prior, whereas confidence information in the prior, which is related to the number of points used to construct the prior, is lost. This limits the possibility and benefit of sequential prior updates, because the final bounds depend only on the size of the final batch.
    
    We present a novel and, in retrospect, surprisingly simple and powerful PAC-Bayesian procedure that allows  sequential prior updates with no information loss. The procedure is based on a novel decomposition of the expected loss of randomized classifiers. The decomposition rewrites the loss of the posterior as an excess loss relative to a downscaled loss of the prior plus the downscaled loss of the prior, which is bounded recursively.
    As a side result, we also present a generalization of the split-kl and PAC-Bayes-split-kl inequalities to discrete random variables, which we use for bounding the excess losses, and which can be of independent interest. In empirical evaluation the new procedure significantly outperforms state-of-the-art.
\end{abstract}

\section{Introduction}

PAC-Bayesian analysis was born from an attempt to derive frequentist generalization guarantees for Bayesian-style prediction rules \citep{STW97,McA98}. The motivation was to provide a way to incorporate prior knowledge into the frequentist analysis of generalization. PAC-Bayesian bounds provide high-probability generalization guarantees for randomized classifiers. A randomized classifier is defined by a distribution $\rho$ on a set of prediction rules $\HH$, which is used to sample a prediction rule each time a prediction is to be made. Bayesian posterior is an example of a randomized classifier, whereas PAC-Bayesian bounds hold generally for all randomized classifiers. Prior knowledge is encoded through a prior distribution $\pi$ on $\HH$, and the complexity of a posterior distribution $\rho$ is measured by the Kullback-Leibler (KL) divergence from the prior, $\KL(\rho\|\pi)$. PAC-Bayesian generalization guarantees are optimized by posterior distributions $\rho$ that optimize a trade-off between empirical data fit and divergence from the prior in the KL sense.

Selection of a ``good'' prior plays an important role in the PAC-Bayesian bounds. If one manages to foresee which prediction rules are likely to produce low prediction error and allocate a higher prior mass for them, then the bounds are tighter, because the posterior only needs to make a small deviation from the prior. But if the prior mass on well-performing prediction rules is small, the bounds are loose. A major technique to design good priors is to use part of the data to estimate a good prior and the rest of the data to compute a PAC-Bayes bound. It is known as data-dependent or data-informed priors \citep{APS07}. However, all existing approaches to data-informed priors have three major disadvantages. The first is that the bounds are computed on ``the rest of the data'' that were not used in construction of the prior. Thus, the sample size in the bounds is only a fraction of the total sample size. Therefore, empirically data-informed priors are not always helpful. In many cases starting with an uninformed prior and using all the data to compute the posterior and the bound turns to be superior to sacrificing part of the data for prior construction \citep{APS07,MGG20}. The second disadvantage is that all the confidence information about the prior is lost in the process. In particular, a prior trained on a few data points is treated in the same way as a prior trained on a lot of data. And a third related disadvantage is that sequential data processing provides no benefit, because the bounds only depend on the size of the last chunk and all the confidence information from processing earlier chunks is lost in the process.

Our main contribution is a new (and simple) way of decomposing the loss of a randomized classifier defined by the posterior. We write it as an excess loss relative to a downscaled loss of the randomized classifier defined by the prior plus the downscaled loss of the randomized classifier defined by the prior. The excess loss can be bounded using PAC-Bayes-Empirical-Bernstein-style inequalities \citep{TS13,MGG20,WMLIS21,WS22}, whereas the loss of the randomized classifier defined by the prior can be bounded recursively. The recursive bound can both use the data used for construction of the prior and ``the rest of the data'', and thereby preserves confidence information on the prior. Our contribution stands out relative to all prior work on PAC-Bayes, and in fact all prior work on frequentist generalization bounds, because it makes sequential data processing and sequential prior updates meaningful and beneficial. 

We note that while several recent papers experimented with sequential posterior updates by using martingale-style analysis, in all these works the prior remained fixed and only the posterior was changing \citep{CWR23,BG23,RTS24}. The work on sequential posterior updates is orthogonal to our contribution and can be combined with it. Namely, it is possible to apply sequential posterior updates in-between sequential prior updates. Another line of work used tools from online learning to derive PAC-Bayesian bounds \citep{JJKO23}, and in this context \citet{HG23} have used sequential prior updates, but their bounds hold for a uniform aggregation of sequentially constructed posteriors, which is different from standard posteriors studied in our work. The confidence bounds in their work come primarily from aggregation rather than confidence in individual posteriors in the sequence (the denominator of their bounds depends on the number of aggregated posteriors). The need to construct and maintain a large number of posteriors has a negative impact on the computational efficiency. Our work is the first one allowing sequential prior updates without loss of confidence information.

An additional side contribution of independent interest is a generalization of the split-kl and PAC-Bayes-split-kl inequalities of \citet{WS22} from ternary to general discrete random variables. It is based on a novel representation of discrete random variables as a superposition of Bernoulli random variables.

The paper is organized in the following way. In Section~\ref{sec:idea} we briefly survey the evolution of data-informed priors in PAC-Bayes and present our main idea behind Recursive PAC-Bayes; in Section~\ref{sec:split-kl} we present our generalization of the split-kl and PAC-Bayes-split-kl inequalities, which are later used to bound the excess losses; in Section~\ref{sec:main} we present the Recursive PAC-Bayes bound; in Section~\ref{sec:experiments} we present an empirical evaluation; and in Section~\ref{sec:discussion} we conclude with a discussion.

\section{The evolution of data-informed priors and the idea of Recursive PAC-Bayes}\label{sec:idea}

In this section we briefly survey the evolution of data-informed priors, and then present our construction of Recursive PAC-Bayes. We consider the standard classification setting, with $\XX$ being a sample space, $\YY$ a label space, $\HH$ a set of prediction rules $h:\XX\to\YY$, and $\ell(h(X),Y) = \1[h(X)\neq Y]$ the zero-one loss function, where $\1[\cdot]$ denotes the indicator function. We let $\DD$ denote a distribution on $\XX\times\YY$ and $S = \lrc{(X_1,Y_1),\dots,(X_n,Y_n)}$ an i.i.d.\ sample from $\DD$. Let $L(h) = \E_{(X,Y)\sim \DD}[\ell(h(X),Y)]$ be the expected and $\hat L(h,S) = \frac{1}{n}\sum_{i=1}^n\ell(h(X_i),Y_i)$ the empirical loss. 

Let $\rho$ be a distribution on $\HH$. A \emph{randomized classifier} associated with $\rho$ samples a prediction rule $h$ according to $\rho$ for each sample $X\in\XX$, and applies it to make a prediction $h(X)$. The expected loss of such randomized classifier, which we call $\rho$, is $\E_{h\sim\rho}[L(h)]$ and the empirical loss is $\E_{h\sim\rho}[\hat L(h,S)]$. For brevity we use $\E_\rho[\cdot]$ to denote $\E_{h\sim\rho}[\cdot]$. 

We use $\KL(\rho\|\pi)$ to denote the Kullback-Leibler divergence between two probability distributions, $\rho$ and $\pi$ \citep{CT06}. For $p,q \in [0,1]$ we further use $\kl(p\|q) = \KL((1-p,p)\|(1-q,q))$ to denote the Kullback-Leibler divergence between two Bernoulli distributions with biases $p$ and $q$.

The goal of PAC-Bayes is to bound $\E_{\rho}[L(h)]$. Below we present how the bounds on $\E_{\rho}[L(h)]$ have evolved. In Appendix~\ref{app:illustrations} we also provide a graphical illustration of the evolution.

\paragraph{Uninformed priors} Early work on PAC-Bayes used \emph{uninformed priors} \citep{McA98}. An uniformed prior $\pi$ is a distribution on $\HH$ that is independent of the data $S$. A classical, and still one of the tightest bounds, is the following. 
\begin{theorem}[PAC-Bayes-$\kl$ Inequality, \citealp{See02}, \citealp{Mau04}] \label{thm:pbkl}
For any probability distribution $\pi$ on $\HH$ that is independent of $S$ and any $\delta \in (0,1)$:
\[
\P[\exists \rho\in{\mathcal{P}}: \kl\lr{\E_\rho[\hat L(h,S)]\middle\|\E_\rho\lrs{L(h)}} \geq \frac{\KL(\rho\|\pi) + \ln(2 \sqrt n/\delta)}{n}]\leq \delta,
\]
where $\mathcal{P}$ is the set of all probability distributions on $\HH$, including those dependent on $S$.
\end{theorem}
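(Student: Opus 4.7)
The plan is to prove this via the classical three-ingredient PAC-Bayes recipe: a change-of-measure inequality that pulls an arbitrary (possibly data-dependent) posterior $\rho$ out of an expectation against the fixed prior $\pi$; joint convexity of the binary $\kl$; and a sharp bound on the moment generating function of $n\,\kl(\hat L(h,S)\|L(h))$ for each fixed $h$, all combined through Markov's inequality to obtain a uniform-in-$\rho$ statement.

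First I would introduce the data-dependent quantity $f(S) := \E_\pi\lrs{e^{n\,\kl(\hat L(h,S)\|L(h))}}$, which depends on $S$ and the fixed prior $\pi$ but not on $\rho$. Since $\pi$ is independent of $S$, Fubini's theorem yields $\E_S\lrs{f(S)} = \E_\pi\lrs{\E_S\lrs{e^{n\,\kl(\hat L(h,S)\|L(h))}}}$. For any fixed $h$, $n \hat L(h,S)$ is a sum of $n$ i.i.d.\ Bernoulli$(L(h))$ random variables, so Maurer's moment lemma gives $\E_S\lrs{e^{n\,\kl(\hat L(h,S)\|L(h))}} \leq 2\sqrt{n}$, and hence $\E_S\lrs{f(S)} \leq 2\sqrt{n}$. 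Markov's inequality then produces the high-probability event that, with probability at least $1-\delta$ over the draw of $S$,
\[
\E_\pi\lrs{e^{n\,\kl(\hat L(h,S)\|L(h))}} \leq \frac{2\sqrt{n}}{\delta}.
\]

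Conditionally on this event, for any distribution $\rho$ on $\HH$ (possibly depending on $S$), I would apply the Donsker--Varadhan variational characterization of the KL divergence to $\phi(h) = n\,\kl(\hat L(h,S)\|L(h))$:
\[
n\,\E_\rho\lrs{\kl(\hat L(h,S)\|L(h))} \leq \KL(\rho\|\pi) + \ln \E_\pi\lrs{e^{n\,\kl(\hat L(h,S)\|L(h))}} \leq \KL(\rho\|\pi) + \ln\lr{2\sqrt{n}/\delta}.
\]
Crucially, since the controlled event involves only $S$ and $\pi$, this inequality holds simultaneously for all $\rho \in {\mathcal P}$. Joint convexity of $(p,q)\mapsto \kl(p\|q)$ together with Jensen's inequality then gives $\kl(\E_\rho\lrs{\hat L(h,S)}\|\E_\rho\lrs{L(h)}) \leq \E_\rho\lrs{\kl(\hat L(h,S)\|L(h))}$. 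Chaining these two inequalities and dividing by $n$ yields the desired bound.

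The main technical obstacle is Maurer's MGF estimate $\E\lrs{e^{n\,\kl(\hat p\|p)}} \leq 2\sqrt{n}$, which requires a careful combinatorial analysis of $\sum_{k=0}^n \binom{n}{k} p^k (1-p)^{n-k} e^{n\,\kl(k/n\|p)}$ and sharp Stirling-type estimates to collapse it to $O(\sqrt{n})$; everything else (Fubini, Markov, Donsker--Varadhan, joint convexity of $\kl$) is routine, and the uniform-in-$\rho$ conclusion is obtained for free because the bad event depends on $S$ alone.
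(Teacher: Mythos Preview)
Your proof is correct and follows exactly the standard route due to \citet{See02} and \citet{Mau04}: Maurer's MGF bound $\E_S[e^{n\,\kl(\hat L(h,S)\|L(h))}]\leq 2\sqrt{n}$, Markov's inequality to get a single high-probability event depending only on $S$ and $\pi$, Donsker--Varadhan change of measure to pull out an arbitrary $\rho$, and joint convexity of $\kl$ via Jensen. Note, however, that the paper does not actually prove \cref{thm:pbkl}; it is quoted as a known result with attribution to \citet{See02} and \citet{Mau04}, so there is no in-paper proof to compare against---your argument is essentially the one from those references.
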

A posterior $\rho$ that minimizes $\E_\rho[L(h)]$ has to balance between allocating higher mass to prediction rules $h$ with small $\hat L(h,S)$ and staying close to $\pi$ in the $\KL(\rho\|\pi)$ sense. Since $\pi$ has to be independent of $S$, typical uninformed priors aim ``to leave maximal options open'' for $\rho$ by staying close to uniform.

\paragraph{Data-informed priors} \citet{APS07} proposed to split the data $S$ into two disjoint sets, $S = S_1\cup S_2$, and use $S_1$ to construct a \emph{data-informed prior} $\pi$ and compute a bound on $\E_\rho[L(h)]$ using $\pi$ and $S_2$. Since in this approach $\pi$ is independent of $S_2$, Theorem~\ref{thm:pbkl} can be applied. The advantage is that $\pi$ can use $S_1$ to give higher mass to promising classifiers, thus relaxing the regularization pressure $\KL(\rho\|\pi)$ and making it easier for $\rho$ to allocate even higher mass to well-performing classifiers (those with small $\hat L(h,S_2)$). The disadvantage is that the sample size in the bound (the $n$ in the denominator) decreases from the size of $S$ to the size of $S_2$. Indeed, \citeauthor{APS07} observed that the sacrifice of $S_1$ for prior construction does not always pay off.

\paragraph{Data-informed priors + excess loss} \citet{MGG20} observed that if we have already sacrificed $S_1$ for the construction of $\pi$, we could also use it to construct a reference prediction rule $h^*$, typically an Empirical Risk Minimizer (ERM) on $S_1$. They then employed the decomposition
\[
    \E_\rho[L(h)] = \E_\rho[L(h) - L(h^*)] + L(h^*)
\]
and used $S_2$ to give a PAC-Bayesian bound on $\E_\rho[L(h) - L(h^*)]$ and a single-hypothesis bound on $L(h^*)$. The quantity $\E_\rho[L(h) - L(h^*)]$ is known as \emph{excess loss}. The advantage of this approach is that when $L(h^*)$ is a good approximation of $\E_\rho[L(h)]$, the excess loss has lower variance than the plain loss $\E_\rho[L(h)]$ and, therefore, is more efficient to bound, whereas the single-hypothesis bound on $L(h^*)$ does not involve the $\KL(\rho\|\pi)$ term. Therefore, it is generally beneficial to use excess losses in combination with data-informed priors. However, as with the previous approach, sacrificing $S_1$ to learn $\pi$ and $h^*$ means that the denominator in the bounds ($n$ in Theorem~\ref{thm:pbkl}) reduces to the size of $S_2$, and it does not always pay off. (We note that the excess loss is not binary and not in the $[0,1]$ interval, and in order to exploit small variance it is actually necessary to apply a PAC-Bayes-Empirical-Bernstein-style inequality \citep{TS13,MGG20,WMLIS21} or the PAC-Bayes-split-kl inequality \citep{WS22} rather than Theorem~\ref{thm:pbkl}, but the point about reduced sample size still applies.)

\paragraph{Recursive PAC-Bayes (new)} We introduce the following decomposition of the loss
\begin{equation}
\label{eq:RPB-base}
\E_\rho[L(h)] = \E_\rho[L(h) - \gamma \E_\pi[L(h')]] + \gamma \E_\pi[L(h')].
\end{equation}
As before, we decompose $S$ into two disjoint sets $S= S_1\cup S_2$. We make the following major observations:
\begin{itemize}[left=0.3cm]
    \item The quantity $\E_\pi[L(h')]$ on the right is ``of the same kind'' as $\E_\rho[L(h)]$ on the left.
    \item We can take an uninformed prior $\pi_0$ and apply Theorem~\ref{thm:pbkl} (or any other suitable PAC-Bayes bound) to bound $\E_\pi[L(h')]$. (The $\KL$ term in the bound on $\E_\pi[L(h')]$ will be $\KL(\pi\|\pi_0)$.)
    \item We can restrict $\pi$ to depend only on $S_1$, but still use all the data $S$ in calculation of the PAC-Bayes bound on $\E_\pi[L(h')]$, because $\pi$ is a posterior relative to $\pi_0$, and a posterior is allowed to depend on all the data, and in particular on any subset of the data. Therefore, the empirical loss $\E_\pi[\hat L(h',S)]$ can be computed on all the data $S$, and the denominator of the bound in Theorem~\ref{thm:pbkl} can be the size of $S$, and not the size of $S_2$. This is what we call \emph{preservation of confidence information on $\pi$}, because all the data $S$ are used to construct a confidence bound on $\E_\pi[L(h')]$, and not just $S_2$. This is in contrast to the bound on $L(h^*)$ in the approach of \citet{MGG20}, which only allows to use $S_2$ for bounding $L(h^*)$. Note that while we use all the data $S$ in calculation of the bound, we only use $S_1$ and $\E_\pi[\hat L(h',S_1)]$ in the construction of $\pi$. Nevertheless, we can still use the knowledge that we will have $n$ samples when we reach the estimation phase, i.e., when constructing $\pi$ we can leave the denominator of the bound at $n$, allowing more aggressive deviation from $\pi_0$.
    \item If we restrict $\pi$ to depend only on $S_1$, then it is a valid prior for estimation of any posterior quantity $\E_\rho[\cdot]$ based on $S_2$. Thus, if we also restrict $\gamma$ to depend only on $S_1$, we can use any PAC-Bayes-Empirical-Bernstein-style inequality or the PAC-Bayes-split-kl inequality to estimate the excess loss $\E_\rho[L(h) - \gamma \E_\pi[L(h')]]$ based on $S_2$, i.e., based on $\E_\rho[\hat L(h,S_2) - \gamma \E_\pi[\hat L(h',S_2)]]$. If $\gamma \E_\pi[L(h')]$ is a good approximation of $\E_\rho[L(h)]$ and $\E_\rho[L(h)]$ is not close to zero, then the excess loss $\E_\rho[L(h) - \gamma \E_\pi[L(h')]]$ is more efficient to bound than the plain loss $\E_\rho[L(h)]$.
    \item In general, since $\E_\rho[L(h)]$ is expected to improve on $\E_\pi[L(h')]$, it is natural to set $\gamma < 1$. However, $\gamma$ is not allowed to depend on $S_2$, because otherwise $\hat L(h,S_2) - \gamma \E_\pi[\hat L(h',S_2)]$ becomes a biased estimate of $L(h) - \gamma \E_\pi[L(h')]$. We discuss the choice of $\gamma$ in more detail when we present the bound and the experiments.
%    \item In the approach of \citeauthor{MGG20} construction of a prior $\pi$ is decoupled from construction of a reference prediction rule (an ``oracle'') $h^*$, whereas in our approach the prior $\pi$ and the ``oracle'' $\gamma \E_\pi[L(h')]$ are coupled. This should not be seen as a disadvantage, because $\gamma$ provides extra flexibility, and because \citet{DHGAR21} have shown that in expectation the minimizer $\pi$ of the variance of the excess loss $\E_\rho[L(h) - \gamma \E_\pi[L(h')]]$ and the minimizer of $\KL(\rho\|\pi)$ coincide (see \cref{app:proof_prior_oracle} for more details).
    \item \citet{BG23} have proposed a sequential martingale-style evaluation of a martingale version of $\E_\rho[L(h)-L(h^*)]$ and $L(h^*)$ in the approach of \citeauthor{MGG20}, but it has not been shown to yield significant improvements yet. The same ``martingalization'' can be directly applied to our decomposition, but to keep things simple we stay with the basic decomposition.
    \item Finally, we note that we can split $S_1$ further and apply \eqref{eq:RPB-base} recursively to bound $\E_\pi[L(h')]$.
%    \item To summarize, the decomposition in \eqref{eq:RPB-base} is much more powerful than the decomposition of \citeauthor{MGG20} for several reasons: (1) The parameter $\gamma$ makes it much more versatile. (2) It allows to use all data $S$ for bounding $\E_\pi[L(h')]$, whereas it is only possible to use $S_2$ to bound $L(h^*)$. (3) It supports recursion.
\end{itemize}
To set notation for recursive decomposition, we use $\pi_0,\pi_1,\dots,\pi_T$ to denote a sequence of distributions on $\HH$, where $\pi_0$ is an uninformed prior and $\pi_T=\rho$ is the final posterior. We use $\gamma_2,\dots,\gamma_T$ to denote a sequence of coefficients. For $t\geq 2$ we then have the recursive decomposition
\begin{equation}
\label{eq:RPB}
\E_{\pi_t}[L(h)] = \E_{\pi_t}[L(h) - \gamma_t\E_{\pi_{t-1}}[L(h)]] + \gamma_t\E_{\pi_{t-1}}[L(h)].
\end{equation}
To construct $\pi_1,\dots,\pi_T$ we split the data $S$ into $T$ non-overlapping subsets, $S= S_1\cup\cdots\cup S_T$. We restrict $\pi_t$ to depend on $\Utrain_t = \bigcup_{s=1}^t S_s$ only, and we use $\Uval_t = \bigcup_{s=t}^T S_s$ to estimate (recursively) $\E_{\pi_t}[L(h)]$ (see Figures~\ref{fig:evolution} and \ref{fig:3-step-decomposition} in \cref{app:illustrations} for a graphical illustration). Note that $S_t$ is used both for construction of $\pi_t$ and for estimation of $\E_{\pi_t}[L(h)]$ (it is both in $\Utrain_t$ and $\Uval_t$), resulting in efficient use of the data. It is possible to use any standard PAC-Bayes bound, e.g.,  Theorem~\ref{thm:pbkl}, to bound $\E_{\pi_1}[L(h)]$, and any PAC-Bayes-Empirical-Bernstein-style bound or the PAC-Bayes-split-kl bound to bound the excess losses $\E_{\pi_t}[L(h) - \gamma_t\E_{\pi_{t-1}}[L(h)]]$. The excess losses take more than three values, so in the next section we present a generalization of the PAC-Bayes-split-kl inequality to general discrete random variables, which may be of independent interest. The Recursive PAC-Bayes bound is presented in \cref{sec:main}.

\section{Split-kl and PAC-Bayes-split-kl inequalities for discrete random variables}
\label{sec:split-kl}

The $\kl$ inequality is one of the tightest concentration of measure inequalities for binary random variables. Letting $\kl^{-1,+}(\hat{p},\varepsilon):=\max\lrc{p: p\in[0,1] \text{ and }\kl(\hat{p}\|p)\leq \varepsilon}$ denote the upper inverse of $\kl$ and $\kl^{-1,-}(\hat{p},\varepsilon):=\min\lrc{p: p\in[0,1] \text{ and }\kl(\hat{p}\|p)\leq \varepsilon}$ the lower inverse, it states the following.
\begin{theorem}[$\kl$ Inequality~\citep{Lan05,FBBT21,FBB22}]\label{thm:kl}
Let $Z_1,\cdots,Z_n$ be independent random variables bounded in the $[0,1]$ interval and with $\E[Z_i] = p$ for all $i$. Let $\hat{p}=\frac{1}{n}\sum_{i=1}^n Z_i$ be the empirical mean. Then, for any $\delta\in(0,1)$:
\[
\P[p\geq \kl^{-1,+}\lr{\hat p, \frac{1}{n}\ln\frac{1}{\delta}}
] \leq \delta \qquad ; \qquad
\P[p \leq \kl^{-1,-}\lr{\hat p, \frac{1}{n}\ln\frac{1}{\delta}}]\leq \delta.
\]
\end{theorem}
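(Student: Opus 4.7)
My plan is to prove both inequalities via the classical Chernoff method applied to $\hat p$, reducing each $[0,1]$-bounded $Z_i$ to a Bernoulli$(p)$ worst case through a convexity argument. First, I would fix $\lambda > 0$ and note that, by convexity of $x\mapsto e^{\lambda x}$ on $[0,1]$, the chord bound $e^{\lambda Z_i}\leq Z_i e^{\lambda} + (1-Z_i)$ holds pointwise, hence $\mathbb{E}[e^{\lambda Z_i}] \leq 1 + p(e^{\lambda} - 1)$. By independence this gives $\mathbb{E}[e^{\lambda n \hat p}] \leq (1+p(e^{\lambda}-1))^n$.

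Next, for any $q\in[p,1]$, Markov's inequality applied to $e^{\lambda n \hat p}$ yields
\[
\mathbb{P}[\hat p \geq q] \;\leq\; e^{-\lambda n q}\,\bigl(1+p(e^{\lambda}-1)\bigr)^n,
\]
and optimizing with $\lambda = \ln\tfrac{q(1-p)}{p(1-q)}$ collapses the right-hand side to $e^{-n\kl(q\|p)}$. Thus $\mathbb{P}[\hat p\geq q]\leq e^{-n\kl(q\|p)}$ for every $q\geq p$, and the same argument applied to $1-Z_i$ (which has mean $1-p$) delivers the twin lower-tail bound $\mathbb{P}[\hat p\leq q]\leq e^{-n\kl(q\|p)}$ for every $q\leq p$.

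The remaining step is to translate these two tail bounds into the inversion form stated in the theorem. For the second inequality, the event $\{p \leq \kl^{-1,-}(\hat p,\,\tfrac{1}{n}\ln\tfrac{1}{\delta})\}$ can occur only when $\hat p \geq p$, and by the definition of the lower inverse it is equivalent in that regime to $\kl(\hat p\|p)\geq \tfrac{1}{n}\ln\tfrac{1}{\delta}$. Since $q\mapsto\kl(q\|p)$ is strictly increasing on $[p,1]$, this event coincides with $\{\hat p \geq q^\star\}$, where $q^\star\in[p,1]$ is the unique solution to $\kl(q^\star\|p)=\tfrac{1}{n}\ln\tfrac{1}{\delta}$; the upper-tail Chernoff bound above then caps its probability at $e^{-n\kl(q^\star\|p)} = \delta$. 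The first inequality follows by a symmetric argument using the lower-tail bound.

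The main obstacle I anticipate is purely bookkeeping: matching the direction of deviation ($\hat p$ above or below $p$) with the correct argument of $\kl$ under inversion, and handling boundary cases ($p\in\{0,1\}$ or $\hat p = p$) where the optimal $\lambda$ degenerates. These are absorbed cleanly because $\kl^{-1,\pm}$ is defined as a max/min over a closed subset of $[0,1]$, so no additional continuity or monotonicity argument is required beyond what is immediate from the formula.
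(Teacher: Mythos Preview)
The paper does not provide its own proof of this statement: Theorem~\ref{thm:kl} is quoted as a known result with attribution to \citet{Lan05,FBBT21,FBB22} and is used as a black box in the proofs of Theorems~\ref{thm:Split_kl} and~\ref{thm:pac-bayes-split-kl-inequality}. There is therefore no in-paper argument to compare against.

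That said, your proposal is a correct and standard derivation. The reduction of each $[0,1]$-valued $Z_i$ to the Bernoulli$(p)$ worst case via the chord inequality $e^{\lambda x}\leq (1-x)+xe^{\lambda}$, followed by Markov's inequality and optimization in $\lambda$, yields exactly the Chernoff--Hoeffding tail $\P[\hat p\geq q]\leq e^{-n\kl(q\|p)}$ for $q\geq p$ (and its mirror for $q\leq p$). Your inversion step is also sound: the key observation is that $\{p\leq \kl^{-1,-}(\hat p,\varepsilon)\}$ forces $\hat p\geq p$ and, on that branch, is equivalent to $\kl(\hat p\|p)\geq \varepsilon$, which by strict monotonicity of $q\mapsto\kl(q\|p)$ on $[p,1]$ is the event $\{\hat p\geq q^\star\}$ with $\kl(q^\star\|p)=\varepsilon$. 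The only edge case worth flagging explicitly is when $\varepsilon>\kl(1\|p)=\ln(1/p)$, in which case no such $q^\star$ exists in $[p,1]$; but then the event is empty and the bound holds trivially, so your remark that the max/min definition of $\kl^{-1,\pm}$ absorbs boundaries is accurate.
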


While the $\kl$ inequality is tight for binary random variables, it is loose for random variables taking more than two values due to its inability to exploit small variance. To address this shortcoming \citet{WS22} have presented the split-kl and PAC-Bayes-split-kl inequalities for ternary random variables. Ternary random variables naturally appear in a variety of applications, including analysis of excess losses, certain ways of analysing majority votes, and in learning with abstention. The bound of \citeauthor{WS22} is based on decomposition of a ternary random variable into a pair of binary random variables and application of the $\kl$ inequality to each of them. Their decomposition yields a tight bound in the binary and ternary case, but loose otherwise. The same decomposition was used by \citet{BG23} to derive a slight variation of the inequality, with the same limitations. We present a novel decomposition of discrete random variables into a superposition of binary random variables. Unlike the decomposition of \citeauthor{WS22}, which only applies in the ternary case, our decomposition applies to general discrete random variables. By combining it with $\kl$ bounds for the binary elements we obtain a tight bound. The decomposition is presented formally below and illustrated graphically in \cref{fig:decomposition} in \cref{app:illustrations}.

\subsection{Split-kl inequality}

Let $Z \in \lrc{b_0,\dots,b_K}$ be a $(K+1)$-valued random variable with $b_0 < b_1 < \cdots < b_K$. For $j\in\lrc{1,\dots,K}$ define $Z_{|j} = \1[Z\geq b_j]$ and $\alpha_j = b_j - b_{j-1}$. Then $Z = b_0+\sum_{j=1}^K \alpha_j Z_{|j}$. For a sequence $Z_1,\dots,Z_n$ of $(K+1)$-valued random variables with the same support, let $Z_{i|j} = \1[Z_i\geq b_j]$ denote the elements of binary decomposition of $Z_i$. 

\begin{theorem}[Split-$\kl$ inequality for discrete random variables]\label{thm:Split_kl}
Let $Z_1,\dots,Z_n$ be i.i.d.\ random variables taking values in $\lrc{b_0,\dots,b_K}$ with $\E[Z_i] = p$ for all $i$. Let $\hat p_{|j} = \frac{1}{n}\sum_{i=1}^n Z_{i|j}$. Then for any $\delta\in(0,1)$:
\[
\P[p\geq b_0+ \sum_{j=1}^K \alpha_j \kl^{-1,+}\lr{\hat p_{|j},\frac{1}{n}\ln\frac{K}{\delta}}]\leq \delta.
\]
\end{theorem}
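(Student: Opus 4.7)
The plan is to reduce everything to the binary $\kl$ inequality (Theorem~\ref{thm:kl}) via the superposition $Z = b_0 + \sum_{j=1}^K \alpha_j Z_{|j}$ that is already given in the setup, together with a union bound over the $K$ binary components. The intuition is that each $Z_{i|j}$ is a legitimate $[0,1]$-valued (in fact Bernoulli) random variable, so we can control its mean using Theorem~\ref{thm:kl}, and then combine the bounds linearly because the coefficients $\alpha_j = b_j - b_{j-1}$ are strictly positive by the assumption $b_0 < b_1 < \cdots < b_K$.

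First I would take expectations in the identity $Z_i = b_0 + \sum_{j=1}^K \alpha_j Z_{i|j}$ (which is a telescoping sum: if $Z_i = b_k$ then $Z_{i|j} = 1$ exactly for $j \le k$ and $b_0 + \sum_{j=1}^k \alpha_j = b_k$). By linearity of expectation,
\[
p = \E[Z_i] = b_0 + \sum_{j=1}^K \alpha_j\, p_{|j}, \qquad p_{|j} := \E[Z_{i|j}] = \P(Z_i \geq b_j).
\]
Thus bounding $p$ from above reduces to simultaneously bounding each $p_{|j}$ from above in terms of its empirical counterpart $\hat p_{|j}$.

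Second, I would fix $j \in \{1,\dots,K\}$ and note that $Z_{1|j},\dots,Z_{n|j}$ are i.i.d.\ Bernoulli random variables with common mean $p_{|j}$, hence satisfy the hypotheses of Theorem~\ref{thm:kl}. Applying the upper-tail form of that theorem with confidence parameter $\delta/K$ gives
\[
\P\!\left[p_{|j} \geq \kl^{-1,+}\!\left(\hat p_{|j},\tfrac{1}{n}\ln\tfrac{K}{\delta}\right)\right] \leq \frac{\delta}{K}.
\]
A union bound over $j = 1,\dots,K$ then shows that with probability at least $1 - \delta$, every $p_{|j}$ lies strictly below $\kl^{-1,+}(\hat p_{|j},\tfrac{1}{n}\ln\tfrac{K}{\delta})$ simultaneously. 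Multiplying each such inequality by the nonnegative constant $\alpha_j$, summing over $j$, and adding $b_0$ yields the claimed bound on $p$. This gives the inequality with probability at least $1-\delta$, which is exactly the complement of the event bounded in the theorem statement.

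Honestly there is no serious obstacle here: the heavy lifting is entirely in the decomposition $Z = b_0 + \sum_j \alpha_j Z_{|j}$, which converts a discrete random variable into a positive linear combination of Bernoullis, after which the proof is just ``binary $\kl$ plus union bound.'' The only things to double-check are (i) that $\alpha_j > 0$ so that the linear combination preserves the direction of the inequalities, and (ii) that the confidence budget $\delta$ is split evenly across the $K$ binary sub-problems, which is what produces the $\ln(K/\delta)$ inside $\kl^{-1,+}$. One could in principle optimize the splitting of $\delta$ across the $K$ events for a possibly tighter bound, but the uniform split is the natural choice and matches the statement.
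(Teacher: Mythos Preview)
Your proposal is correct and follows essentially the same route as the paper: decompose $p = b_0 + \sum_{j=1}^K \alpha_j p_{|j}$, apply the binary $\kl$ inequality (Theorem~\ref{thm:kl}) to each component with confidence $\delta/K$, and combine via a union bound. The paper's proof is just a terser version of what you wrote, without the explicit checks on $\alpha_j > 0$ or the i.i.d.\ Bernoulli structure of $Z_{i|j}$.
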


\begin{proof}
    Let $p_{|j} = \E[\hat p_{|j}]$, then $p = b_0+\sum_{j=1}^K \alpha_j p_{|j}$ and
    \[\P[p\geq b_0+ \sum_{j=1}^K \alpha_j \kl^{-1,+}\lr{\hat p_{|j},\frac{1}{n}\ln\frac{K}{\delta}}] \leq \P[\exists j: p_{|j}\geq \kl^{-1,+}\lr{\hat p_{|j},\frac{1}{n}\ln\frac{K}{\delta}}] \leq \delta,
    \]where the first inequality is by the decomposition of $p$ and the second inequality is by the union bound and \cref{thm:kl}.
\end{proof}

\subsection{PAC-Bayes-Split-kl inequality}

Let $f:\HH\times\ZZ\to\lrc{b_0,\dots,b_K}$ be a $(K+1)$-valued loss function. (To connect it to the earlier examples, in the binary prediction case we would have $\ZZ=\XX\times\YY$ with elements $Z = (X,Y)$ and $f(h,Z) = \ell(h(X),Y)$, but we will need a more general space $\ZZ$ later.) For $j\in\lrc{1,\dots,K}$ let $f_{|j}(\cdot,\cdot) = \1[f(\cdot,\cdot) \geq b_j]$. Let $\DD_Z$ be an unknown distribution on $\ZZ$. For $h\in\HH$ let $F(h) = \E_{\DD_Z}[f(h,Z)]$ and $F_{|j}(h)=\E_{\DD_Z}[f_{|j}(h,Z)]$. Let $S = \lrc{Z_1,\dots,Z_n}$ be an i.i.d.\ sample according to $\DD_Z$ and $\hat F_{|j}(h,S)=\frac{1}{n}\sum_{i=1}^n f_{|j}(h, Z_i)$. 

\begin{theorem}[PAC-Bayes-Split-kl Inequality]\label{thm:pac-bayes-split-kl-inequality}
For any distribution $\pi$ on $\HH$ that is independent of $S$ and any $\delta\in(0,1)$:
\[
    \P[\exists\rho\in\mathcal{P}:\E_\rho[F(h)] \geq b_0 + \sum_{j=1}^K \alpha_j \kl^{-1,+}\lr{\E_\rho[\hat F_{|j}(h,S)],\frac{\KL(\rho\|\pi)+\ln\frac{2K\sqrt{n}}{\delta}}{n}}]\leq \delta,
\]
where $\mathcal{P}$ is the set of all possible probability distributions on $\HH$ that can depend on $S$.
\end{theorem}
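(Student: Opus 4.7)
The plan is to reduce \cref{thm:pac-bayes-split-kl-inequality} to $K$ applications of the standard PAC-Bayes-$\kl$ inequality (\cref{thm:pbkl}), one per binary indicator $f_{|j}$, and combine them via linearity of expectation and a union bound over $j$. This mirrors the proof of \cref{thm:Split_kl}, but lifted to the PAC-Bayesian setting where we additionally take an expectation over $h\sim\rho$.

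First I would observe that the pointwise identity $f(h,Z) = b_0 + \sum_{j=1}^K \alpha_j f_{|j}(h,Z)$, with $\alpha_j = b_j - b_{j-1} > 0$, gives after taking expectation over $Z\sim\DD_Z$ and then over $h\sim\rho$ the clean decomposition
\[
\E_\rho[F(h)] = b_0 + \sum_{j=1}^K \alpha_j \E_\rho[F_{|j}(h)].
\]
Each $f_{|j}$ takes values in $\{0,1\}\subseteq[0,1]$ and $\pi$ is independent of $S$, so \cref{thm:pbkl} applied to the loss $f_{|j}$ with confidence parameter $\delta/K$ yields that, with probability at least $1-\delta/K$, for every $\rho\in\mathcal{P}$ simultaneously,
\[
\kl\lr{\E_\rho[\hat F_{|j}(h,S)] \,\middle\|\, \E_\rho[F_{|j}(h)]} \leq \frac{\KL(\rho\|\pi) + \ln(2K\sqrt{n}/\delta)}{n}.
\]
Since $\kl(\hat p\|\cdot)$ is non-decreasing on the right of $\hat p$, inverting yields the one-sided bound $\E_\rho[F_{|j}(h)] \leq \kl^{-1,+}\lr{\E_\rho[\hat F_{|j}(h,S)],\,(\KL(\rho\|\pi) + \ln(2K\sqrt{n}/\delta))/n}$.

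Next I would combine the $K$ inequalities by a union bound over $j\in\{1,\dots,K\}$: with probability at least $1-\delta$, all $K$ upper bounds hold simultaneously for every $\rho\in\mathcal{P}$. Plugging them into the decomposition of $\E_\rho[F(h)]$, and using that the coefficients $\alpha_j$ are non-negative so the direction of the inequality is preserved term by term, produces exactly the claimed bound. Note carefully the order of quantifiers: because \cref{thm:pbkl} itself is uniform in $\rho$, the single event of probability at least $1-\delta/K$ controls \emph{all} $\rho$ at once for a given $j$, so the union bound only costs a factor of $K$ in $\delta$ and not a supremum over $\rho$.

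I do not foresee a substantive obstacle; the argument is essentially a direct PAC-Bayesian analogue of the proof of \cref{thm:Split_kl}. The only small points worth verifying are that $f_{|j}$ is genuinely $[0,1]$-valued (immediate, as it is an indicator) and that $\hat F_{|j}(h,S)$ is indeed the empirical mean of $f_{|j}(h,Z_i)$ over the same i.i.d.\ sample $S$ as required by \cref{thm:pbkl}, which holds by construction since $\hat F_{|j}(h,S) = \frac{1}{n}\sum_{i=1}^n f_{|j}(h,Z_i)$.
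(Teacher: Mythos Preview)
Your proposal is correct and follows essentially the same approach as the paper's proof: decompose $F(h)$ via the binary indicators $f_{|j}$, apply \cref{thm:pbkl} to each $f_{|j}$ with confidence $\delta/K$, and combine by a union bound over $j\in\{1,\dots,K\}$. The additional remarks you make about the positivity of $\alpha_j$ and the uniformity in $\rho$ are accurate and make the argument slightly more explicit than the paper's version.
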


\begin{proof}
    We have $f(\cdot,\cdot) = b_0 + \sum_{j=1}^K\alpha_j f_{|j}(\cdot,\cdot)$ and $F(h) = b_0 + \sum_{j=1}^K \alpha_j F_{|j}(h)$. Therefore,
\begin{multline*}
    \P[\exists\rho\in\mathcal{P}:\E_\rho[F(h)] \geq b_0 + \sum_{j=1}^K \alpha_j \kl^{-1,+}\lr{\E_\rho[\hat F_{|j}(h,S)],\frac{\KL(\rho\|\pi)+\ln\frac{2K\sqrt{n}}{\delta}}{n}}]
    \\\leq\P[\exists\rho\in\mathcal{P}\text{ and }\exists j:\E_\rho[F_{|j}(h)] \geq \kl^{-1,+}\lr{\E_\rho[\hat F_{|j}(h,S)],\frac{\KL(\rho\|\pi)+\ln\frac{2K\sqrt{n}}{\delta}}{n}}]
    \leq \delta,
\end{multline*}
where the first inequality is by the decomposition of $F$ and the second inequality is by the union bound and application of \cref{thm:pbkl} to $F_{|j}$ (note that $f_{|j}$ is a zero-one loss function).
\end{proof}

\section{Recursive PAC-Bayes bound}
\label{sec:main}

Now we derive a Recursive PAC-Bayes bound based on the loss decomposition in equation \eqref{eq:RPB}. We aim to bound $\E_{\pi_t}[L(h) - \gamma_t \E_{\pi_{t-1}}[L(h')]]$, which we denote by
\[F_{\gamma_t,\pi_{t-1}}(h) = L(h) - \gamma_t \E_{\pi_{t-1}}[L(h')] = \E_{\DD\times\pi_{t-1}}[\ell(h(X),Y) - \gamma_t\ell(h'(X),Y)],
\]
where $\DD\times\pi_{t-1}$ is a product distribution on $\XX\times\YY\times\HH$ and $h'\in\HH$ is sampled according to $\pi_{t-1}$. We further define 
\[
f_{\gamma_t}(h,(X,Y,h')) = \ell(h(X),Y) - \gamma_t \ell(h'(X),Y) \in \lrc{-\gamma_t, 0, 1-\gamma_t, 1},
\]
then $F_{\gamma_t,\pi_{t-1}}(h) = \E_{\DD\times\pi_{t-1}}[f_{\gamma_t}(h,(X,Y,h'))]$. In order to apply \cref{thm:pac-bayes-split-kl-inequality}, we represent $f_{\gamma_t}$ as a superposition of binary functions. For this purpose we let $\lrc{b_{t|0},b_{t|1},b_{t|2},b_{t|3}} = \lrc{-\gamma_t, 0, 1-\gamma_t, 1}$ and define $f_{\gamma_t|j}(h,(X,Y,h')) = \1[f_{\gamma_t}(h,(X,Y,h'))\geq b_{t|j}]$. We let $F_{\gamma_t,\pi_{t-1}|j}(h) = \E_{\DD\times\pi_{t-1}}[f_{\gamma_t|j}(h,(X,Y,h'))]$, then $F_{\gamma_t,\pi_{t-1}}(h) = -\gamma_t + \sum_{j=1}^3 (b_{t|j}-b_{t|j-1})F_{\gamma_t,\pi_{t-1}|j}(h)$.

Now we construct an empirical estimate of $F_{\gamma_t,\pi_{t-1}|j}(h)$. We first let $\hat \pi_{t-1} = \lrc{h^{\pi_{t-1}}_1,h^{\pi_{t-1}}_2,\dots}$ be a sequence of prediction rules sampled independently according to $\pi_{t-1}$. We define ${\Uval_t\circ \hat \pi_{t-1}} = \lrc{\lr{X_i,Y_i,h^{\pi_{t-1}}_i}:(X_i,Y_i)\in\Uval_t}$. In words, for every sample $(X_i,Y_i)\in\Uval_t$ we sample a prediction rule $h^{\pi_{t-1}}_i$ according to $\pi_{t-1}$ and put the triplet $(X_i,Y_i,h^{\pi_{t-1}}_i)$ in $\Uval_t\circ \hat \pi_{t-1}$. The triplets $(X_i,Y_i,h^{\pi_{t-1}}_i)$ correspond to the random variables $Z$ in \cref{thm:pac-bayes-split-kl-inequality}. We note that $|\Uval_t| = |\Uval_t \circ \hat \pi_{t-1}|$, and we let $\nval_t = |\Uval_t|$. We define the empirical estimate of $F_{\gamma_t,\pi_{t-1}|j}(h)$ as ${\hat F_{\gamma_t|j}(h,\Uval_t \circ \hat \pi_{t-1})} = \frac{1}{\nval_t} \sum_{(X,Y,h')\in\Uval_t \circ \hat\pi_{t-1}} f_{\gamma_t|j}(h,(X,Y,h'))$. Note that $\E_{\DD\times\pi_{t-1}}[\hat F_{\gamma_t|j}(h,\Uval_t\circ\hat \pi_{t-1})] = F_{\gamma_t,\pi_{t-1}|j}(h)$, therefore, we can use \cref{thm:pac-bayes-split-kl-inequality} to bound $\E_{\pi_t}[F_{\gamma_t,\pi_{t-1}}(h)]$ using its empirical estimates. We are now ready to state the bound.
\begin{theorem}[Recursive PAC-Bayes Bound]
\label{thm:RPB}
    Let $S = S_1\cup\dots\cup S_T$ be an i.i.d.\ sample split in an arbitrary way into $T$ non-overlapping subsamples, and let $\Utrain_t = \bigcup_{s=1}^t S_s$ and $\Uval_t = \bigcup_{s=t}^T S_s$. Let $\nval_t = |\Uval_t|$. Let $\pi_0^*,\pi_1^*,\dots,\pi_T^*$ be a sequence of distributions on $\HH$, where $\pi_t^*$ is allowed to depend on $\Utrain_t$, but not the rest of the data. Let $\gamma_2,\dots,\gamma_T$ be a sequence of coefficients, where $\gamma_t$ is allowed to depend on $\Utrain_{t-1}$, but not the rest of the data. For $t\in\lrc{1,\dots,T}$ let $\PP_t$ be a set of distributions on $\HH$, which are allowed to depend on $\Utrain_t$. Then for any $\delta\in(0,1)$:
    \[
    \P[\exists t\in\lrc{1,\dots,T}\text{ and } \pi_t \in \PP_t: \E_{\pi_t}[L(h)] \geq \B_t(\pi_t)] \leq \delta,
    \]
    where $\B_t(\pi_t)$ is a PAC-Bayes bound on $\E_{\pi_t}[L(h)]$ defined recursively as follows. For $t=1$
%    \begin{equation}
%    \label{eq:B1-bound}
    \[
    \B_1(\pi_1) = \kl^{-1,+}\lr{\E_{\pi_1}[\hat L(h,S)],\frac{\KL(\pi_1\|\pi_0^*)+\ln\frac{2T\sqrt{n}}{\delta}}{n}}.
    \]
%    \end{equation}
    For $t\geq 2$ we let $\Ex_t(\pi_t,\gamma_t)$ denote a PAC-Bayes bound on $\E_{\pi_t}[L(h) - \gamma_t \E_{\pi_{t-1}^*}[L(h')]]$ given by
%    \begin{align}
    \[
    \Ex_t(\pi_t,\gamma_t) =  -\gamma_t + \sum_{j=1}^3 (b_{t|j} - b_{t|j-1})\kl^{-1,+}\lr{\E_{\pi_t}\lrs{\hat F_{\gamma_t|j}(h,\Uval_t \circ \hat \pi_{t-1}^*)}, \frac{\KL(\pi_t\|\pi_{t-1}^*)+\ln\frac{6T\sqrt{\nval_t}}{\delta}}{\nval_t}}
    \]
    and then
    \begin{equation}
    \label{eq:RPB-rec}
    \B_t(\pi_t) = \Ex_t(\pi_t,\gamma_t) + \gamma_t \B_{t-1}(\pi_{t-1}^*).%\label{eq:Bt-bound}
    \end{equation}
%    \end{align}
\end{theorem}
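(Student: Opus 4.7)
The plan is to prove the statement by induction on $t$, combined with a union bound that allocates probability $\delta/T$ to each recursion level. The logarithmic terms $\ln(2T\sqrt{n}/\delta)$ and $\ln(6T\sqrt{\nval_t}/\delta)$ appearing in $\B_1$ and $\Ex_t$ are exactly what one obtains by invoking \cref{thm:pbkl} and \cref{thm:pac-bayes-split-kl-inequality} with confidence parameter $\delta/T$ (noting $2K = 6$ when $K=3$), so summing the failure probabilities over the $T$ levels yields $\delta$.

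For the base case $t=1$, I would apply \cref{thm:pbkl} with prior $\pi_0^*$ on the full sample $S$. Because $\pi_0^*$ is an uninformed prior (formally, depends on $\Utrain_0 = \emptyset$), it is independent of $S$, so the theorem applies with confidence $\delta/T$ and yields $\E_{\pi_1}[L(h)] \leq \B_1(\pi_1)$ simultaneously for every distribution dependent on $S$, in particular for every $\pi_1\in\PP_1$.

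For the inductive step, I would condition on $\Utrain_{t-1}$ and view $\Uval_t\circ\hat\pi_{t-1}^*$ as $\nval_t$ i.i.d.\ triplets $Z_i=(X_i,Y_i,h_i^{\pi_{t-1}^*})$ drawn from $\DD\times\pi_{t-1}^*$; this is legitimate because $\Uval_t$ is disjoint from $\Utrain_{t-1}$, the hypotheses $h_i^{\pi_{t-1}^*}$ are sampled independently of everything else, and $\pi_{t-1}^*$ and $\gamma_t$ are fixed under this conditioning. The $4$-valued loss function $f_{\gamma_t}$ then lets me invoke \cref{thm:pac-bayes-split-kl-inequality} with $K=3$, prior $\pi_{t-1}^*$, and confidence $\delta/T$, giving
\[
\E_{\pi_t}\lrs{F_{\gamma_t,\pi_{t-1}^*}(h)} \leq \Ex_t(\pi_t,\gamma_t)
\]
simultaneously for all $\pi_t\in\PP_t$, with conditional probability at least $1-\delta/T$; marginalizing preserves this guarantee. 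Combining with the decomposition \eqref{eq:RPB} and the inductive hypothesis $\E_{\pi_{t-1}^*}[L(h')]\leq \B_{t-1}(\pi_{t-1}^*)$ (applied to the specific choice $\pi_{t-1}^*$, which is permitted since the PAC-Bayes bounds at stage $t-1$ actually hold uniformly over all distributions depending on the relevant data), I multiply by $\gamma_t\geq 0$ and add $\Ex_t(\pi_t,\gamma_t)$ to both sides to obtain $\E_{\pi_t}[L(h)]\leq \B_t(\pi_t)$, which is \eqref{eq:RPB-rec}. A final union bound over $t\in\{1,\dots,T\}$ packages the $T$ bad events of probability $\delta/T$ into one bad event of probability $\delta$.

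The main obstacle, and the subtlety the proof really hinges on, is tracking the measurability/independence structure carefully at each recursion level: I must ensure that at stage $t$ the prior $\pi_{t-1}^*$ and the coefficient $\gamma_t$ are measurable with respect to $\Utrain_{t-1}$ only (so they are independent of the ``validation'' portion $\Uval_t$ used to evaluate the excess loss), and that the randomness in the auxiliary sample $\hat\pi_{t-1}^*$ is independent of $S$ so that $Z_i$ are genuinely i.i.d.\ from $\DD\times\pi_{t-1}^*$ conditional on $\Utrain_{t-1}$. Once this bookkeeping is in place, the rest of the argument is a direct composition of the two off-the-shelf inequalities \cref{thm:pbkl} and \cref{thm:pac-bayes-split-kl-inequality} with the linear decomposition \eqref{eq:RPB}.
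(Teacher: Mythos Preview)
Your proposal is correct and follows essentially the same approach as the paper: apply \cref{thm:pbkl} at level $t=1$ and \cref{thm:pac-bayes-split-kl-inequality} at each level $t\geq 2$, each with confidence $\delta/T$, then combine via the recursive decomposition \eqref{eq:RPB} and a union bound over the $T$ levels. Your more explicit treatment of the conditioning on $\Utrain_{t-1}$ and the need for $\gamma_t\geq 0$ is helpful detail that the paper leaves implicit, but the argument is the same.
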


\begin{proof}
    By \cref{thm:pbkl} we have $\P[\exists \pi_1 \in \PP_1: \E_{\pi_1}[L(h)] \geq B_1(\pi_1)] \leq \frac{\delta}{T}$. Further, by \cref{thm:pac-bayes-split-kl-inequality} for $t\in\lrc{2,\dots,T}$ we have $\P[\exists \pi_t\in\PP_t: \E_{\pi_t}[L(h) - \gamma_t \E_{\pi_{t-1}^*}[L(h')]] \geq \Ex_t(\pi_t,\gamma_t)] \leq \frac{\delta}{T}$. The theorem follows by a union bound and the recursive decomposition of the loss \eqref{eq:RPB}.
\end{proof}

\paragraph{Discussion}
\begin{itemize}[left=0.3cm]
    \item Note that $\pi_1^*,\dots,\pi_T^*$ can be constructed sequentially, but $\pi_t^*$ can only be constructed based on the data in $\Utrain_t$, meaning that in the construction of $\pi_t^*$ we can only rely on $\E_{\pi_t}\lrs{\hat F_{\gamma_t|j}(h,S_t\circ \hat \pi_{t-1})}$, but not on $\E_{\pi_t}\lrs{\hat F_{\gamma_t|j}(h,\Uval_t \circ \hat \pi_{t-1})}$. Also note that $S_t$ is part of both $\Utrain_t$ and $\Uval_t$ (see \cref{fig:evolution} in \cref{app:illustrations} for a graphical illustration). In other words, when we evaluate the bounds we can use additional data. And even though the additional data can only be used in the evaluation stage, we can still use the knowledge that we will get more data for evaluation when we construct $\pi_t^*$. For example, we can take
    \begin{equation}
    \label{eq:opt-pi-1}
    \pi_1^* = \arg\min_\pi \kl^{-1,+}\lr{\E_{\pi}[\hat L(h,S_1)],\frac{\KL(\pi\|\pi_0^*)+\ln\frac{2T\sqrt{n}}{\delta}}{n}}
    \end{equation}
    and for $t\geq 2$
    \begin{equation}
    \label{eq:opt-pi-t}
    \pi_t^* = \arg\min_\pi \sum_{j=1}^3 (b_{t|j} - b_{t|j-1})\kl^{-1,+}\lr{\E_{\pi}\lrs{\hat F_{\gamma_t|j}(h,S_t\circ \hat \pi_{t-1}^*)}, \frac{\KL(\pi\|\pi_{t-1}^*)+\ln\frac{6T\sqrt{\nval_t}}{\delta}}{\nval_t}}.
    \end{equation}
    The empirical losses above are calculated on $S_t$ corresponding to $\pi_t^*$, but the sample sizes $\nval_t$ correspond to the size of the validation set $\Uval_t$ rather than the size of $S_t$. This allows to be more aggressive in deviating with $\pi_t^*$ from $\pi_{t-1}^*$ by sustaining larger $\KL(\pi_t^*\|\pi_{t-1}^*)$ terms.
    \item Similarly, $\gamma_2,\dots,\gamma_T$ can also be constructed sequentially, as long as $\gamma_t$ only depends on $\Utrain_{t-1}$ (otherwise $\hat F_{\gamma_t|j}(h,S_t \circ\hat \pi_{t-1}^*)$ becomes a biased estimate of $F_{\gamma_t,\pi_{t-1}^*|j}(h)$).
    \item We naturally want to have improvement over recursion steps, meaning $\B_t(\pi_t^*) < \B_{t-1}(\pi_{t-1}^*)$. Plugging this into \eqref{eq:RPB-rec}, we obtain $\Ex(\pi_t^*,\gamma_t) + \gamma_t B_{t-1}(\pi_{t-1}^*) < B_{t-1}(\pi_{t-1}^*)$, which implies that we want $\gamma_t$ to be sufficiently small to satisfy $\gamma_t < 1 - \frac{\Ex_t(\pi_t^*,\gamma_t)}{B_{t-1}(\pi_{t-1}^*)}$. At the same time, $\gamma_t$ should be non-negative. Therefore, improvement over recursion steps can only be maintained as long as $\Ex_t(\pi_t^*,\gamma_t) < B_{t-1}(\pi_{t-1}^*)$. We note that $\gamma_t \B_{t-1}(\pi_{t-1}^*)$ term in \eqref{eq:RPB-rec} is linearly increasing in $\gamma_t$, whereas $\Ex(\pi_t^*,\gamma_t)$ is decreasing in $\gamma_t$. The value of $\gamma_t$ that minimizes the trade-off depends on the data. Even though it is not allowed to use $\Uval_t$ for tuning $\gamma_t$, it is possible to take a grid of values of $\gamma_t$ and a union bound over the grid, and then select the best value from the grid based the value of the bound evaluated on $\Uval_t$. %We also note that if $\E_{\pi_t^*}[L(h)] > 0$, then overly small $\gamma_t$ can increase the excess loss. Thus, unless the loss is close to zero, $\gamma_t$ should not be too small either.
\end{itemize}

\section{Experiments}
\label{sec:experiments}

In this section, we provide an empirical comparison of our Recursive PAC-Bayes (RPB) procedure to the following prior work: i) Uninformed priors (Uninformed), \citep{DR17}; ii) Data-informed priors (Informed) \citep{APS07,PRSS21}; iii) Data-informed prior + excess loss (Informed + Excess) \citep{MGG20,WS22}. All the experiments were run on a laptop. The source code for replicating the experiments is available at Github\footnote{https://github.com/pyijiezhang/rpb}.

We start with describing the details of the optimization procedure, and then present the results.

\subsection{Details of the optimization and evaluation procedure}
\label{sec:optimization}

% In this section, we will explain how to perform optimization and evaluation using the framework.

%\subsection{Optimization}

We constructed $\pi_1^*,\dots,\pi_T^*$ %using the optimization objective in \eqref{eq:opt-pi-1} and then 
sequentially %$\pi_2^*,\dots,\pi_T^*$ 
using the optimization objective, \eqref{eq:opt-pi-1} for $\pi_1^*$ and \eqref{eq:opt-pi-t} for $\pi_2^*$ to $\pi_T^*$, and computed the bound using the recursive procedure in \cref{thm:RPB}. %(For code homogeneity $\pi_1^*$ was computed using \eqref{eq:opt-pi-t} with excess loss relative to $\pi_0$ and $\gamma_1=\frac{1}{2}$, rather than \eqref{eq:opt-pi-1}; the bounds were computed exactly as in \cref{thm:RPB}.) 
There are a few technical details concerning convexity of the optimization procedure and infinite size of the set of prediction rules $\HH$ that we address next. % the sequence $\pi_1^*,\cdots,\pi_T^*=\rho$ using the same optimization rule as described in Eq. \eqref{eq:opt-pi-1} and Eq. \eqref{eq:opt-pi-t}, and evaluate the sequence using the bound described in Theorem \ref{thm:RPB}. However, some technical details must be taken care of, which we will describe in the following.

\subsubsection{Convexification of the loss functions}\label{sec:surrogate-loss}

The functions $f_{\gamma_t|j}(h,(X,Y,h'))$ defined in \cref{sec:main} are non-convex and non-differentiable:
%Recall that for any $j\in[K]$, 
%\[
$f_{\gamma_t|j}(h,(X,Y,h')) = \1[f_{\gamma_t}(h,(X,Y,h'))\geq b_{t|j}] = \1[\ell(h(X),Y) - \gamma_t \ell(h'(X),Y) \ge b_{t|j}]$.
%\]
In order to facilitate optimization, we approximate the external indicator function $\1[z\ge z_0]$ by a sigmoid function $\omega(z;c_1,z_0)=(1+\exp(c_1(z-z_0)))^{-1}$ with a fixed parameter $c_1>0$ specified in \cref{app:sec:other-exp-details}.

Furthermore, since the zero-one loss $\ell(h(X),Y)$ is also non-differentiable, we adopt the cross-entropy loss, as in most modern training procedures \citep{PRSS21}. Specifically, for a $k$-class classification problem, let $h:\mathcal{X}\rightarrow \R^k$ represent the function implemented by the neural network, assigning each class a real value. %For simplicity, we focus on a single sample $(X,Y)$. 
Let $u=h(X)$ be the assignment, with $u_i$ being the $i$-th value of the vector.
To convert this real-valued vector into a probability distribution over classes, we apply the softmax function $\sigma:\R^k\rightarrow \Delta^{k-1}$, where $\sigma(u)_i=\exp(c_2 u_i)/\sum_j \exp(c_2 u_j)$ for some $c_2>0$ for each entry. The cross-entropy loss $\ell^{\text{ce}}:\R^k\times [k]\rightarrow \R$ is defined by $\ell^{\text{ce}}(u,Y)=-\log(\sigma(u)_Y)$. However, since this loss is unbounded, whereas the PAC-Bayes-kl bound requires losses within $[0,1]$, we enforce a $[0,1]$-valued cross-entropy loss by mixing the output distribution with a uniform distribution $\sigma(u)$, i.e., $\tilde \sigma(u)_i = (1-p_{\min})\sigma(u)_i + p_{\min}/k$ for all $i\in[k]$ and for some $p_{\min}>0$, and then rescaling it to $[0,1]$ by taking $\tilde{\ell}^{\text{ce}}(u,Y)=-\log(\tilde{\sigma}(u)_Y)/\log(k/p_{\min})$.
%first lower-bounding the probability assigned to $Y$ by taking $\tilde{\sigma}(u)_Y=\max(\sigma(u)_Y,p_{\min})$ for $p_{\min}>0$, and then  rescaling it to $[0,1]$ by taking $\tilde{\ell}^{\text{ce}}(u,Y)=-\log(\tilde{\sigma}(u)_Y)/\log(1/p_{\min})$ \citep{DR17}.

We emphasize that in the evaluation of the bound (using \cref{thm:RPB}), we directly compute the zero-one loss and the $f_{\gamma_t|j}$ functions without employing the approximations.

\subsubsection{Relaxation of the PAC-Bayes-kl bound}\label{sec:relaxation-of-pac-bayes-kl}
The PAC-Bayes-$\kl$ bound is often criticized for being unfriendly to optimization \citep{RTS24}. Therefore, several relaxations have been proposed, including the PAC-Bayes-classic bound \citep{McA99}, the PAC-Bayes-$\lambda$ bound \citep{TIWS17}, and the PAC-Bayes-quadratic bound \citep{RTS19,PRSS21}, among others. In our optimization we have adopted the bound of \citet{McA99} instead of the kl-based bounds in Equation \eqref{eq:opt-pi-t}. %While we presented our results with the PAC-Bayes-$\kl$ bound in previous sections, we will adopt the relaxations when performing optimization. For instance, if the PAC-Bayes-$\kl$ reads $p\le \kl^{-1,+}(\hat p,\varepsilon)$, the PAC-Bayes-classic bound can be expressed as $p\le \hat p + \sqrt{\frac{\varepsilon}{2}}$, the PAC-Bayes-$\lambda$ bound as $p\le \frac{\hat p}{1-\frac{\lambda}{2}}+\frac{\varepsilon}{\lambda(1-\frac{\lambda}{2})}$, and the PAC-Bayes-quadratic bound as $p\le (\sqrt{\hat p+\frac{\varepsilon}{2}}+\sqrt{\frac{\varepsilon}{2}})^2$.

%The relaxations are convenient for optimization. However, we use the PAC-Bayes-$\kl$ when evaluating the bound.

We again emphasize that in the evaluation of the bound we used the kl-based bounds in \cref{thm:RPB}.

\subsubsection{Estimation of $\E_\pi[\cdot]$} \label{sec:estimation-gibbs-loss}

Due to the infinite size of $\HH$ and lack of a closed-form expression for $\E_{\pi_1}[\hat L(h,S)]$ and $\E_{\pi_t}[\hat F_{\gamma_t|j}(h,\Uval_t\circ\hat \pi_{t-1}^*)]$ appearing in \cref{thm:RPB}, we approximate them by sampling \citep{PRSS21}. For optimization, we sample one classifier for each mini-batch during stochastic gradient descent. For evaluation, we sample one classifier for each data in the corresponding evaluation dataset. Due to approximation of the empirical quantities the final bound in \cref{thm:RPB} requires an additional concentration bound. (We note that the extra bound is only required for computation of the final bound, but not for optimization of $\hat \pi_t^*$.) Specifically, let $\hat \pi^*_{t} = \lrc{h^{\pi_{1}}_1,h^{\pi_{t}}_2,\dots,h^{\pi_{t}}_m}$ be $m$ prediction rules sampled independently according to $\pi_{t}$. Then %, for each $t\in[T]$, we can bound $\E_{\pi_t}[\cdot]$ with probability at least $1-\delta'$:
for any function $f(h)$ taking values in $[0,1]$ (which is the case for $\hat L(h,S)$ and $\hat F_{\gamma_t|j}(h,\Uval_t\circ\hat \pi_{t-1}^*)$) and $\delta'\in(0,1)$ we have
\[
    \P[\E_{\pi_t^*}[f(h)] \geq \kl^{-1,+}\lr{\frac{1}{m}\sum_{i=1}^m f(h_{i}^{\pi_t^*}), \frac{1}{m}\log \frac{1}{\delta'}}]\leq \delta'.
\]
It is worth noting that $\E_{\pi_t^*}[f(h)]$ is evaluated for a fixed $\pi_t^*$, meaning that there is no selection involved, and therefore no $\KL$ term appears in the bound above. We, of course, take a union bound over all the quantities being estimated. 

\subsection{Experimental results}

We evaluated our approach and compared it to prior work using multi-class classification tasks on MNIST \citep{lecun-mnisthandwrittendigit-2010} and Fashion MNIST \citep{xiao2017fashion} datasets, both with 60000 training data. The experimental setup was based on the work of \citet{DR17} and \citet{PRSS21}.  Similar to them we used Gaussian distributions for all the priors and posteriors, modeled by probabilistic neural networks. Technical details are provided in \ref{sec:app:experimental details}.

The empirical evaluation is presented in Table \ref{tab:main_results}. For the Uninformed approach, we trained and evaluated the bound using the entire training dataset directly. For the other two baseline methods, Informed and Informed + Excess Loss, we used half of the training data to train the informed prior and an ERM $h^*$ for the excess loss, and the other half to learn the posterior. For our Recursive PAC-Bayes, we chose $\gamma_t=1/2$ for all $t$, and conducted experiments with $T=2, 4, 6, 8$ to study the impact of recursion depth. (Each value of $T$ corresponded to a separate run of the algorithm and a separate evaluation of the bound, i.e., they should not be seen as successive refinements.) We applied a geometric data split. Specifically, for $T=2$ the split was (30000, 30000) points; for $T=4$, it was (7500, 7500, 15000, 30000); for $T=6$ it was (1875, 1875, 3750, 7500, 15000, 30000); and for $T=8$, it was (469, 469, 937, 1875, 3750, 7500, 15000, 30000). This approach allowed the early recursion steps, which had fewer data points, to efficiently learn the prior, while preserving enough data for fine-tuning in the later steps. Note that with this approach the value of $\nval_t = |\Uval_t| = \sum_{s=t}^T |S_s|$, which is in the denominator of the bounds in \cref{thm:RPB}, is at least $\frac{n}{2}$. %We visualize the training and evaluation data for each method in Figure \ref{fig:evolution} (not to scale).

%We note that two recursion steps use $\pi_1$ and $\pi_2$ and can be seen as analogous to using informed priors or informed priors with excess loss (see \cref{fig:evolution} in \cref{app:illustrations}). 

\begin{table}[h]
\caption{Comparison of the classification loss of the final posterior $\rho$ on the entire training data, $\E_\rho[\hat L(h,S)]$ (Train 0-1), and on the testing data, $\E_\rho[\hat L(h,S^{\text{test}})]$ (Test 0-1), and the corresponding bounds for each method on MNIST and Fashion MNIST. We report the mean and one standard deviation over 5 repetitions% and mark the best result in \textbf{bold}
. ``Unif.'' abbreviates the Uniform approach, ``Inf.'' the Informed, ``Inf. + Ex.'' the Informed + Excess Loss, and ``RPB'' the Recursive PAC-Bayes.
}
\begin{tabular}{l|ccc|ccc}

\multirow{2}{*}{} & \multicolumn{3}{c|}{MNIST} & \multicolumn{3}{c}{Fashion MNIST} \\
 & Train 0-1 & Test 0-1 & Bound & Train 0-1 & Test 0-1 & Bound \\
 %& $\E_\rho[\hat L(h,S)]$ &  &  & $\E_\rho[\hat L(h,S)]$ &  & \\ 
 \hline
Uninf.       & .343 (2e-3) & .335 (3e-3) & .457 (2e-3) & .382 (2e-3) & .384 (2e-3) & .464 (2e-3) \\
Inf.         & .377 (8e-4) & .371 (6e-3) & .408 (9e-4) & .412 (1e-3) & .413 (6e-3) & .440 (1e-3) \\
Inf. + Ex.   & .157 (2e-3) & .151 (3e-3) & .192 (2e-3) & .280 (4e-3) & .285 (5e-3) & .342 (6e-3) \\
RPB $T=2$%2-st. 
& .143 (2e-3) & .139 (3e-3) & .321 (3e-3) & .257 (3e-3) & .266 (5e-3) & .404 (3e-3) \\
RPB $T=4$%4-st.
& .112 (1e-3) & .109 (1e-3) & .203 (8e-4) & .203 (2e-3) & .213 (3e-3) & .293 (1e-3)  \\
RPB $T=6$%6-st.
& .103 (1e-3) & .101 (1e-3) & .166 (1e-3) & .186 (4e-4) & .198 (1e-3) & .255 (1e-3) \\
RPB $T=8$%6-st.
& \textbf{.101 (1e-3)} & \textbf{.097 (2e-3)} & \textbf{.158 (2e-3)} & \textbf{.181 (1e-3)} & \textbf{.192 (3e-3)} & \textbf{.242 (1e-3)} 
\end{tabular}
\label{tab:main_results}
\end{table}

\cref{tab:main_results} shows that even with only $T=2$, which corresponds to the data split used in the Informed and the Informed + Excess Loss approaches, RPB achieves better test performance than prior work. As the recursion deepens, further improvements in both the test error and the bound are observed. We note that while the bound for $T=2$ is looser compared to the Informed + Excess Loss method, deeper recursion yields bounds that are tighter. Overall, deep recursion provides substantial improvements in the bound and the test error relative to prior work.

Tables \ref{tab:detail-rpb8-mnist} and \ref{tab:detail-rpb8-fmnist} provide a glimpse into the training progress of RPB with $T=8$ by showing the evolution of the key quantities along the recursive process. Similar tables for other values of $T$ are provided in \cref{app:sec:more-tables}, along with training details for other methods. The tables show an impressive reduction of the $\KL$ term and significant improvement of the bound as the recursion proceeds, demonstrating effectiveness of the approach.

\begin{table}[H]
\caption{Insight into the training process of the Recursive PAC-Bayes for $T=8$ on MNIST. The table shows the evolution of $\Ex_t(\pi_t^*,\gamma_t)$, $B_t(\pi_t^*)$, and other quantities as the training progresses with $t$. We define $\hat F_{\gamma_t}(h,\Uval_t\circ\hat \pi_{t-1}) = -\gamma_t + \sum_{j=1}^3 (b_{t|j}-b_{t|j-1})\hat F_{\gamma_t|j}(h,\Uval_t\circ\hat\pi_{t-1})$.}
\centering
\begin{tabular}{c|ccccc|c}
%  &  & Ex. Loss &  & Ex. Risk & Bound &  \\ 
$t$  & $\nval_t$ & $\E_{\pi_t}[\hat F_{\gamma_t}(h,\Uval_t\circ\hat \pi_{t-1})]$ & $\frac{\KL(\pi_t^*\|\pi_{t-1}^*)}{\nval_t}$ & $\Ex_t(\pi_t^*,\gamma_t)$ & $B_t(\pi^*_t)$ & Test 0-1 \\ \hline
1 & 60000 &             & .009 (3e-4) &             & .612 (9e-3) & .532 (.011) \\
2 & 59532 & -0.046 (4e-3) & .031 (1e-3) & .114 (2e-3) & .421 (5e-3) & .215 (7e-3)  \\
3 & 59063 & .040 (3e-3) & .013 (9e-4) & .125 (3e-3) & .336 (2e-3) & .146 (3e-3)  \\
4 & 58125 & .049 (1e-3) & .005 (3e-4) & .099 (1e-3) & .267 (7e-4) & .120 (2e-3)  \\
5 & 56250 & .052 (4e-4) & .002 (1e-4) & .083 (1e-3) & .217 (1e-3) & .111 (2e-3)  \\
6 & 52500 & .051 (1e-3) & .001 (4e-5) & .076 (1e-3) & .185 (1e-3) & .104 (2e-3) \\
7 & 45000 & .050 (1e-3) & 8e-4 (6e-5) & .073 (1e-3) & .166 (1e-3) & .099 (1e-3)  \\
8 & 30000 & .050 (1e-3) & 6e-4 (4e-5) & .074 (1e-3) & .158 (2e-3) & .097 (2e-3) 
\end{tabular}
\label{tab:detail-rpb8-mnist}
\end{table}

\begin{table}[H]
\caption{Insight into the training process of the Recursive PAC-Bayes for $T=8$ on Fashion MNIST.}
\centering
\begin{tabular}{c|ccccc|c}
%  &  & Ex. Loss &  & Ex. Risk & Bound &  \\ 
$t$  & $\nval_t$ & $\E_{\pi_t}[\hat F_{\gamma_t}(h,\Uval_t\circ\hat \pi_{t-1})]$ & $\frac{\KL(\pi_t^*\|\pi_{t-1}^*)}{\nval_t}$ & $\Ex_t(\pi_t^*,\gamma_t)$ & $B_t(\pi^*_t)$ & Test 0-1 \\ \hline
1 & 60000 &             & .003 (7e-5) &             & .733 (7e-3) & .686 (8e-3) \\
2 & 59532 & -0.043 (8e-3) & .023 (6e-4) & .104 (9e-3) & .470 (8e-3) & .309 (7e-3)  \\
3 & 59063 & .083 (4e-3) & .008 (3e-4) & .161 (3e-3) & .396 (4e-3) & .242 (1e-3)  \\
4 & 58125 & .090 (3e-3) & .004 (5e-4) & .142 (4e-3) & .341 (5e-4) & .216 (5e-3)  \\
5 & 56250 & .093 (3e-3) & .001 (2e-4) & .126 (3e-3) & .297 (4e-3) & .204 (4e-3)  \\
6 & 52500 & .090 (1e-3) & 6e-4 (6e-5) & .117 (1e-3) & .265 (1e-3) & .195 (3e-3) \\
7 & 45000 & .090 (1e-3) & 4e-4 (2e-5) & .115 (1e-3) & .248 (1e-3) & .195 (5e-4)  \\
8 & 30000 & .090 (1e-3) & 4e-4 (1e-5) & .117 (1e-3) & .242 (1e-3) & .192 (3e-3) 
\end{tabular}
\label{tab:detail-rpb8-fmnist}
\end{table}

\section{Discussion}
\label{sec:discussion}

We have presented the first PAC-Bayesian bound that supports sequential prior updates and preserves confidence information on the prior. The work closes a long-standing gap between Bayesian and Frequentist learning by making sequential data processing and sequential updates of prior knowledge meaningful and beneficial in the frequentist framework, as it has always been in the Bayesian framework. We have shown that apart from theoretical beauty the approach is highly beneficial in practice.

The Recursive PAC-Bayes framework is extremely rich and powerful, and leads to numerous directions for future research, some of which we briefly sketch next.
\begin{itemize}[left=0.3cm]
    \item The decomposition in \eqref{eq:RPB} applies to any loss function, including unbounded losses. It would be interesting to find additional applications to it.
    \item While we have restricted ourselves to the zero-one loss function to illustrate the use of PAC-Bayes-split-kl, the results can be directly generalized to any bounded loss function by replacing PAC-Bayes-split-kl with PAC-Bayes-Empirical-Bernstein or PAC-Bayes-Unexpected-Bernstein, and deriving the corresponding analogue of \cref{thm:RPB} (which is straightforward).
    \item We have shown that the bound works well with geometric split of the data, but there are many other ways to split the data which could be studied. 
    \item There is also a lot of space for experimentation with optimization of $\gamma_t$.
    \item It would be interesting to study how the bound will perform in sequential learning settings, where the data arrives sequentially, and thus the partition is dictated externally.
    \item There are many interesting research directions from the computational perspective. We note that for base models with linear computational complexity (e.g., neural networks) the overhead of recursion is relatively small and optimization time of Recursive PAC-Bayes is comparable to processing all data at once or in two chunks (as in data-dependent priors). For base models with superlinear computational complexity (e.g., kernel SVMs) sequential training of several small models in the recursion may actually be cheaper than training a big model based on all the data. Moreover, since the bound in \cref{thm:RPB} holds for any sequence of distributions $\pi_0^*,\pi_1^*,\dots,\pi_T^*$, the optimization in equation \eqref{eq:opt-pi-t} is allowed to be approximate. Considering that the improvement of the bounds and the test loss relative to prior work was very significant, there is space to look at the trade-off between statistical power and computational complexity. Namely, it may potentially be possible to relax the approximation of $\arg\min$ in equation \eqref{eq:opt-pi-t} to gain computational speed-up at the cost of only a small compromise on the bounds and test losses.
    \item We note that it is possible to start the recursion at $\pi_0$. Namely, it is possible to use, for example, \cref{thm:kl} to bound $\E_{\pi_0}[L(h)]$ using all the data, and apply the recursive decomposition \eqref{eq:RPB} starting from $\pi_1$. Whether this would yield an advantage relative to starting the recursion at $\pi_1$, as we did, remains to be studied.
\end{itemize}

\begin{ack}
% Use unnumbered first level headings for the acknowledgments. All acknowledgments
% go at the end of the paper before the list of references. Moreover, you are required to declare
% funding (financial activities supporting the submitted work) and competing interests (related financial activities outside the submitted work).
% More information about this disclosure can be found at: \url{https://neurips.cc/Conferences/2024/PaperInformation/FundingDisclosure}.

% Do {\bf not} include this section in the anonymized submission, only in the final paper. You can use the \texttt{ack} environment provided in the style file to automatically hide this section in the anonymized submission.
YW acknowledges support from the Novo Nordisk Foundation, grant number NNF21OC0070621. YZ acknowledge Ph.D. funding from Novo Nordisk A/S. BECA acknowledges funding from the ANR grant project BACKUP ANR-23-CE40-0018-01.
\end{ack}

%\newpage

\bibliography{bibliography}

%%%%%%%%%%%%%%%%%%%%%%%%%%%%%%%%%%%%%%%%%%%%%%%%%%%%%%%%%%%%

\newpage

\appendix

%\section{Evolution of PAC-Bayes}
\section{Illustrations}
\label{app:illustrations}
In this appendix we provide graphical illustrations of the basic concepts presented in the paper.

\begin{figure}[H]
    \centering
    \includegraphics[width=.95\textwidth]{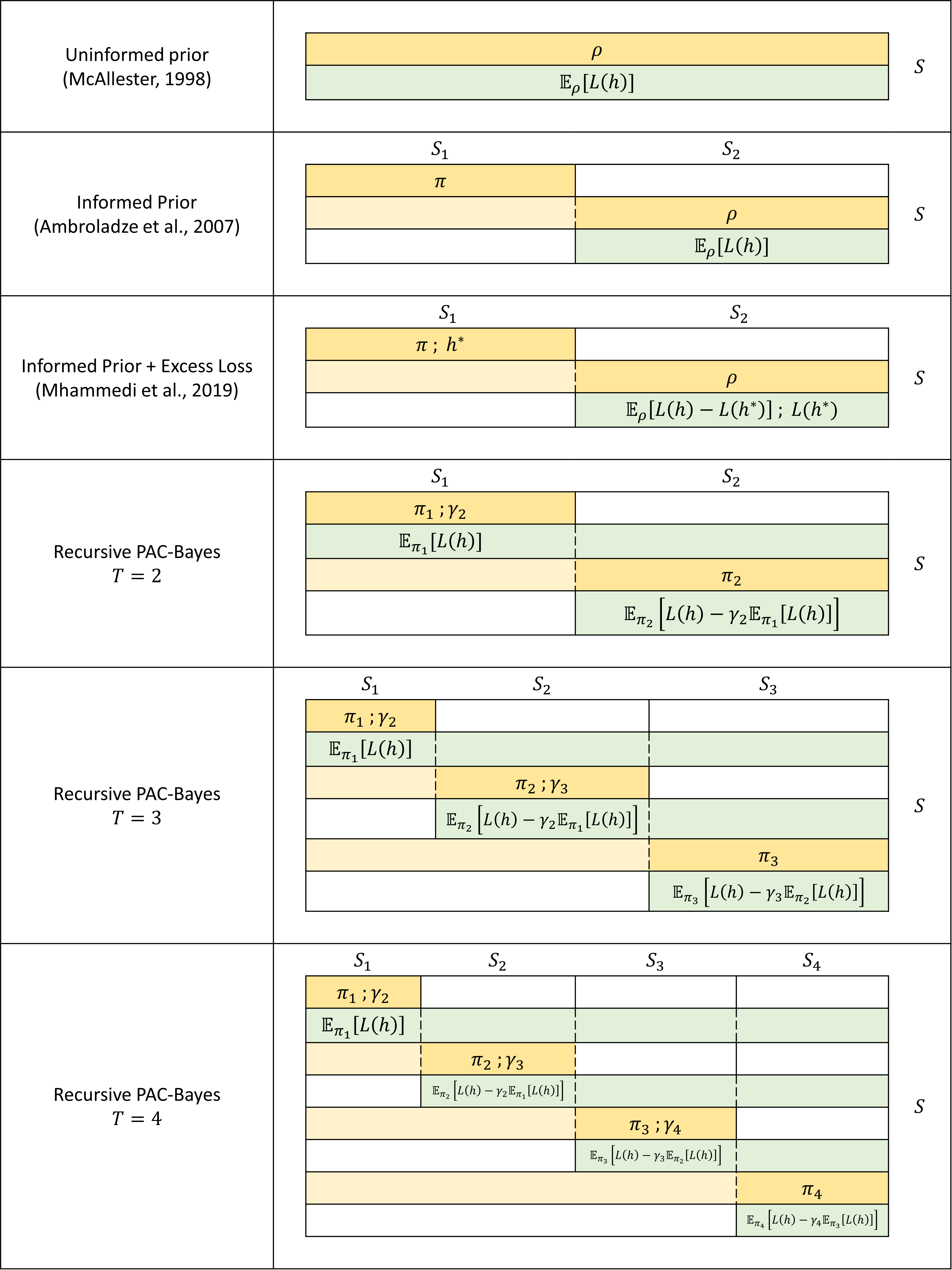}
    \caption{\textbf{Evolution of PAC-Bayes.} The figure shows how data are used by different PAC-Bayes approaches. Dark yellow shows data used directly for optimization of the indicated quantities. Light yellow shows data involved indirectly through dependence on the prior. Light green shows data used for estimation of the indicated quantities. In Recursive PAC-Bayes data are released and used sequentially chunk-by-chunk, as indicated by the dashed lines. For example, in the $T=4$ case $\E_{\pi_1}[L(h)]$ is first evaluated on $S_1$ to construct $\pi_1$ and $\gamma_2$, then in the first recursion step on $S_1\cup S_2$, in the second step on $S_1\cup S_2\cup S_3$, and in the last step on all $S$.
    }
    \label{fig:evolution}
\end{figure}

\begin{figure}
    \centering
    \includegraphics[width=\textwidth]{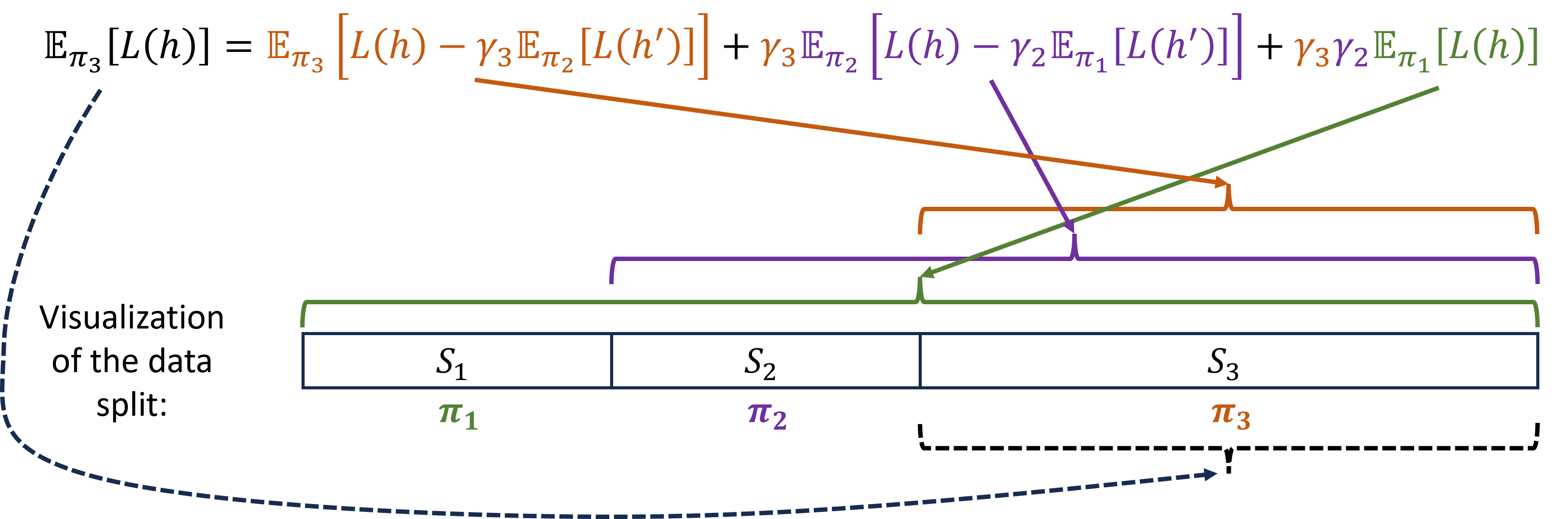}
    \caption{
\textbf{Recursive Decomposition into Three Terms.} The figure illustrates recursive decomposition of $\E_{\pi_3}[L(h)]$ into three terms based on equation \eqref{eq:RPB}, and a geometric data split, as used in our experiments. The bottom line illustrates which data are used for construction of which distribution: $S_1$ for $\pi_1$; $S_2$ for $\pi_2$; and $S_3$ for $\pi_3$. The brackets above the data show which data are used for computing PAC-Bayes bounds for which term: $S_1\cup S_2\cup S_3$ for $\E_{\pi_1}[L(h)]$; $S_2\cup S_3$ for $\E_{\pi_2}[L(h)-\gamma_2\E_{\pi_1}[L(h')]]$; and $S_3$ for $\E_{\pi_3}[L(h) - \gamma_3\E_{\pi_2}[L(h')]]$. Note that a direct computation of a PAC-Bayes bound on $\E_{\pi_3}[L(h)]$ would have only allowed to use the data in $S_3$, as shown by the black dashed line. The figure illustrates that recursive decomposition provides more efficient use of the data. We also note that initially we start with poor priors, and so the $\KL(\pi_t\|\pi_{t-1})$ term for small $t$ is expected to be large, but this is compensated by a small multiplicative factor $\prod_{i=t+1}^T\gamma_i$ and availability of a lot of data $\bigcup_{i=t}^T S_i$ for computing the PAC-Bayes bound. For example, $\E_{\pi_1}[L(h)]$ is multiplied by $\gamma_3\gamma_2$ and we can use all the data for computing a PAC-Bayes bound on this term. By the time we reach higher $t$, the priors $\pi_{t-1}$ get better, and the $\KL(\pi_t\|\pi_{t-1})$ term in the bounds gets much smaller, and additionally the bounds benefit from the small variance of the excess loss. With geometric split of the data, we use little data to quickly move $\pi_t$ to a good region, and then we still have enough data for a good estimation of the later terms, like $\E_{\pi_3}[L(h) - \gamma_3\E_{\pi_2}[L(h')]]$.%The data is split into three chunks, and posteriors are trained and evaluated as shown in Figure \ref{fig:evolution}. A direct bound on $\E_{\pi_3}[L(h)]$, as most Informed Prior methods did, can only use the last chunk, with the prior $\pi_2$ depending on the first two chunks. In contrast, our decomposition allows all data for bounding the last term (red) with the data-independent prior $\pi_0$, the last two chunks for the middle term (green), and the last chunk for the first term (blue), ensuring more efficient use of data. The posteriors trained in early stages are bound using poor priors, leading to large $\KL(\pi_t\|\pi_{t-1})$, but this is offset by small coefficients $\gamma_{t+1}\cdots \gamma_T$ and a large denominator due to larger sample sizes, enabling rapid adjustment of $\pi_t$. The posteriors trained in later stages, on the other hand, benefit from having good priors, have small $\KL(\pi_t\|\pi_{t-1})$ and low excess loss variance, making small denominators from reduced sample sizes less problematic and enabling fine-tuning of $\pi_t$.
}
    \label{fig:3-step-decomposition}
\end{figure}

\begin{figure}
    \centering
    \includegraphics[width=\textwidth]{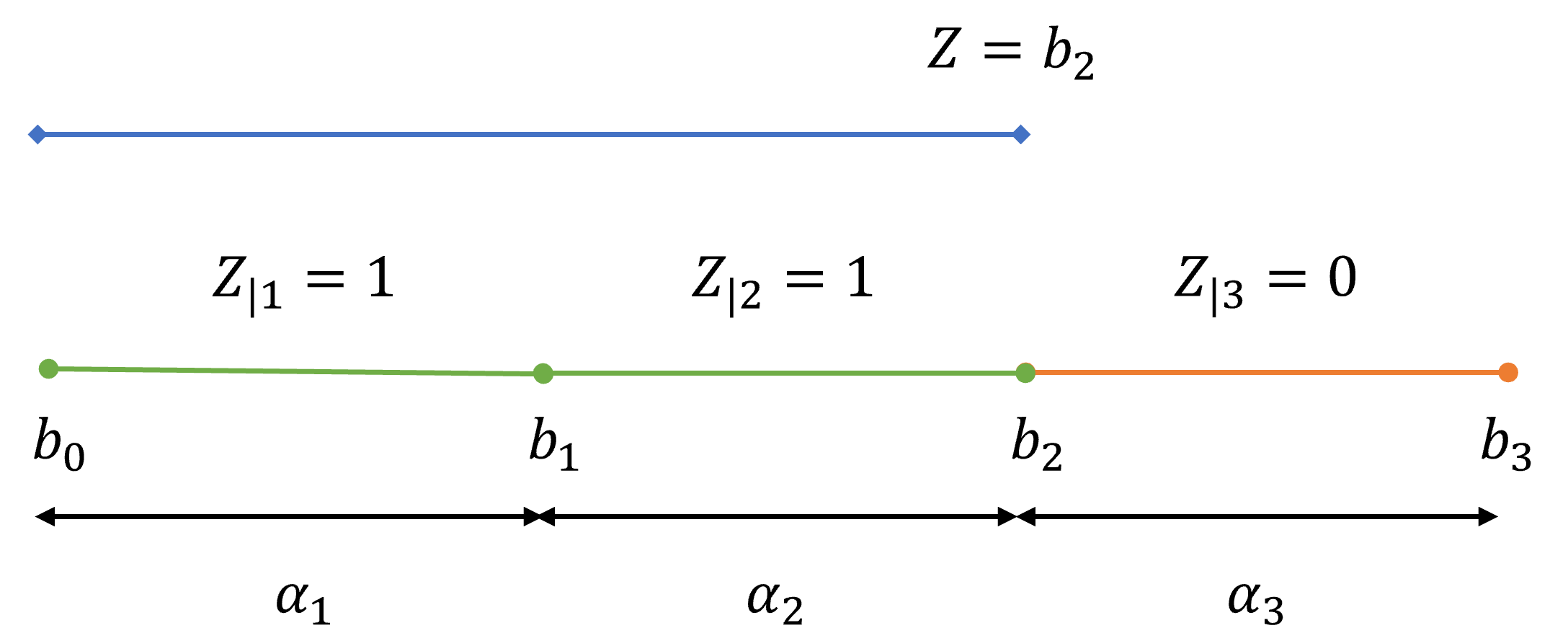}
    \caption{\textbf{Decomposition of a discrete random variable into a superposition of binary random variables.} The figure illustrates a decomposition of a discrete random variable $Z$ with domain of four values $b_0 < b_1 < b_2 < b_3$ into a superposition of three binary random variables, $Z = b_0 + \sum_{j=1}^3 \alpha_j Z_{|j}$. A way to think about the decomposition is to compare it to a progress bar. In the illustration $Z$ takes value $b_2$, and so the random variables $Z_{|1}$ and $Z_{|2}$ corresponding to the first two segments ``light up'' (take value 1), whereas the random variable $Z_{|3}$ corresponding to the last segment remains ``turned off'' (takes value 0). The value of $Z$ equals the sum of the lengths $\alpha_j$ of the ``lighted up'' segments.}
    \label{fig:decomposition}
\end{figure}

\section{Experimental details}\label{sec:app:experimental details}

In this section, we provide the details of the datasets in Appendix \ref{app:sec:datasets}, our neural network architectures in Appendix \ref{app:sec:nn}, and other details in Appendix \ref{app:sec:other-exp-details}. We provide further statistics for all the methods on both datasets in Appendix \ref{app:sec:more-tables}.

\subsection{Datasets}\label{app:sec:datasets}
We perform our evaluation on two datasets, MNIST \citep{lecun-mnisthandwrittendigit-2010} and Fashion MNIST \citep{xiao2017fashion}. We will introduce these two datasets in the following.

\subsubsection{MNIST}
The MNIST (Modified National Institute of Standards and Technology) dataset is one of the most renowned and widely used datasets in the field of machine learning, particularly for training and testing in the domain of image processing and computer vision. It consists of a large collection of handwritten digit images, spanning the numbers 0 through 9. 

The MNIST dataset comprises a total of 70,000 grayscale images of handwritten digits, where the training set has 60,000 images and the test set has 10,000 images. Each image in the dataset is 28x28 pixels, resulting in a total of 784 pixels per image. The images are in grayscale, with pixel values ranging from 0 (black) to 255 (white). Each image is associated with a label from 0 to 9, indicating the digit that the image represents. The images are typically stored in a single flattened array of 784 elements, although they can also be represented in a 28x28 matrix format.

\subsubsection{Fashion MNIST}
The Fashion MNIST dataset is a contemporary alternative to the traditional MNIST dataset, created to provide a more challenging benchmark for machine learning algorithms. It consists of images of various clothing items and accessories, offering a more complex and varied dataset for image classification tasks.

The Fashion MNIST dataset contains a total of 70,000 grayscale images, where the training set has 60,000 images and the test set has 10,000 images. Each image in the dataset is 28x28 pixels, resulting in a total of 784 pixels per image. The images are in grayscale, with pixel values ranging from 0 (black) to 255 (white). Each image is associated with one of 10 categories, representing different types of fashion items. The categories are: 1. T-shirt/top 2. Trouser 3. Pullover 4. Dress 5. Coat 6. Sandal 7. Shirt 8. Sneaker 9. Bag 10. Ankle boot. Similar to MNIST, the images are stored in a single flattened array of 784 elements but can also be represented in a 28x28 matrix format.

\subsection{Neural network architectures}\label{app:sec:nn}

For all methods, we adopt a family of factorized Gaussian distributions to model both priors and posteriors, characterized by the form $\pi=\mathcal{N}(w,\sigma \mathbf I)$ where $w\in\R^d$ denotes the mean vector, and $\sigma$ represents the scalar variance. We use feedforward neural networks for the MNIST dataset \citep{lecun-mnisthandwrittendigit-2010}, while using convolutional neural networks for the Fashion MNIST dataset \citep{xiao2017fashion}.

Both our feedforward neural network and convolutional neural network are probabilistic, and each layer has a factorized (i.e. mean-field) Gaussian distribution.

Our feedforward neural network has the following architecture:
\begin{enumerate}[left=0.3cm]
    \item Input layer. Input size: $28 \times 28$ (flattened to 784 features).

    \item Probabilistic linear layer 1. Input features: 784, output features: 600, activation: ReLU.

    \item Probabilistic linear layer 2. Input features: 600, output features: 600, activation: ReLU.

    \item Probabilistic linear layer 3. Input features: 600, output features: 600, activation: ReLU.

    \item Probabilistic linear layer 4. Input features: 600, output features: 10, activation: Softmax.
\end{enumerate}

Our convolutional neural network has the following architecture:
\begin{enumerate}[left=0.3cm]
    \item Input layer. Input size: $1 \times 28 \times 28$.

    \item Probabilistic convolutional layer 1. Input channels: 1, output channels: 32, kernel size: 3x3, activation: ReLU.

    \item Probabilistic convolutional layer 2. Input channels: 32, output channels: 64, kernel size: 3x3, activation: ReLU.

    \item Max pooling layer. Pooling size: 2x2.

    \item Flattening layer. Flattens the output from the previous layers into a single vector.

    \item Probabilistic linear layer 1. Input features: 9216, output features: 128, activation: ReLU.

    \item Probabilistic linear layer 2 (output layer). Input features: 128, output features: 10, activation: Softmax.
\end{enumerate}

%Regarding the distributions, we consider the family of factorized Gaussian distributions, described by the form $\pi=\mathcal{N}(w,\sigma \mathbf I)$ where $w\in\R^d$ represents the mean vector, and $\sigma$ represents the scalar variance. The uninformed prior $\pi_0=\mathcal{N}(w_0,\sigma_0 \mathbf{I})$ is independent of data,  while each data-informed prior $\pi_t^*=\mathcal{N}(w_{\Utrain_t},\sigma_{\Utrain_t}\mathbf{I})$ for $t\in[T]$ is allowed to depend on $\Utrain_t$.

\subsection{Other details in the experiments}\label{app:sec:other-exp-details}

\paragraph{General for all methods}
The methods in comparisons are trained and evaluated using the procedure described in Section \ref{sec:idea} and visually illustrated in Figure \ref{fig:evolution}. We will provide some further details for each method later in the following.
For all methods in comparison, we apply the optimization and evaluation method described in 
Section \ref{sec:optimization}. For the approximation described in Section \ref{sec:surrogate-loss}, we set the parameters $c_1=c_2=5$. The lower bound for the prediction $p_{\min}=1e-5$. The $\delta$ in our bound and all the other methods is selected to be $\delta=0.025$. As mentioned in Section \ref{sec:relaxation-of-pac-bayes-kl}, we use the PAC-Bayes-classic bound by \citeauthor{McA99} in replacement of PAC-Bayes-$\kl$ when doing optimization. Note that for all methods, we also have to estimate the empirical loss of the posterior $\E_\pi[\cdot]$ described in Section \ref{sec:estimation-gibbs-loss}. We also allocate the budget for the union bound for the estimation such that these estimations in the bound are controlled with probability at least $1-\delta'$, where we chose $\delta'=0.01$. Therefore, the ultimate bounds for all methods hold with probability at least $1-\delta-\delta'$. Note that we do not consider such bounds during optimization but only when estimating the bounds.

For all methods, we adopt a family of factorized Gaussian distributions to model both priors and posteriors of all the learnable parameters of the classifiers, characterized by the form $\pi=\mathcal{N}(w,\sigma \mathbf I)$ where $w\in\R^d$ denotes the mean vector, and $\sigma$ represents the scalar variance. For all methods, we initialize an uninformed prior $\pi_0=\mathcal{N}(w_0,\sigma_0 \mathbf{I})$ that is independent of data, where the mean is randomly initialized, and the variance $\sigma_0$ is initialized to 0.03 \citep{PRSS21}. 

In the training process of all methods in our experiments, we set the batch size to 250, the number of training epochs to 200, and use stochastic gradient descent with a learning rate of 0.005 and a momentum of 0.95. %As mentioned in Section \ref{sec:relaxation-of-pac-bayes-kl}, we have to relax the bounds during optimization. The resulting optimization procedures are similar and roughly can be expressed in the following form:
%\[
%    {\pi_t}^{*} = \argmin_{\pi_t} {\E_{\pi_t}[\hat L(h,S)] + \eta \KL(\pi_t\|\pi_{t-1}^{*}).}
%\]
%In particular, we relax the KL bound using the PAC-Bayes-classic bound \citep{McA99}, ending up in the following form:
%\[
%    {\pi_t}^{*} = \argmin_{\pi_t} {\E_{\pi_t}[\hat L(h,S)] + \sqrt{\frac{\KL(\pi_t\|\pi_{t-1}^{*})+\log\frac{2\sqrt{n}}{\delta}}{2n}},}
%\]
%where the first term is the expected empirical loss and the second term is the KL divergence between the posterior and the prior. Since the dataset could be large and sampling can be expensive, we approximate the first term using mini-batches and one Monte Carlo sample. As we employ the family of factorized Gaussian distributions, the second KL term has an analytical solution. As a result, we can minimize the above objectives using stochastic gradient descent.

\paragraph{Uninformed priors}
We take $\pi_0$ defined above as the uninformed prior. We then learn the posterior $\rho$ from the prior using the entire training dataset $S$, applying a PAC-Bayes bound. We evaluate the bound using, again, the entire training dataset $S$.

\paragraph{Data-informed priors}
We start with the same $\pi_0$ as the uninformed prior. We train the informed prior $\pi_1$ using $S_1$ with $|S_1|=|S|/2$ by minimizing a PAC-Bayes bound. The posterior $\rho$ is then learned using the informed prior $\pi_1$ and the subset $S_2$ with $|S_2|=|S|/2$, again by minimizing a PAC-Bayes bound. The bound is evaluated using $S_2$.

\paragraph{Data-informed priors + excess loss}
We train the informed prior $\pi_1$ and the reference classifier $h^*$ using $S_1$ that contains half of the training dataset. $\pi_1$ is obtained by minimizing a PAC-Bayes bound with the uninformed prior $\pi_0$, while the reference classifier $h^*$ is obtained by an empirical risk minimizer (ERM). The posterior $\rho$ is obtained by minimizing a PAC-Bayes bound on the excess loss between $\rho$ and $h^*$. The prior used in the bound for both training and evaluation is the data-informed prior  $\pi_1$. Therefore, the data for both training and evaluation of $\rho$ must be the other half of data $S_2$.

\subsection{Further results for the experiments}\label{app:sec:more-tables}

In this section, we report some more statistics for all methods.

For all methods, to calculate the classification loss of $\rho$ on the testing data, $\E_\rho[\hat L(h,S^{\text{test}})]$ (Test 0-1), we sample one classifier for each data. The train 0-1 loss for all methods is computed on the entire training dataset $S$, while the test 0-1 loss for all methods is computed on the test dataset $S_{\text{test}}$.

\subsubsection{Recursive PAC-Bayes}
We report the additional results of Recursive PAC-Bayes on MNIST with $T=2$ in Table \ref{tab:detail-rpb2-mnist}, $T=4$ in Table \ref{tab:detail-rpb4-mnist}, and $T=6$ in Table \ref{tab:detail-rpb6-mnist}. We report Recursive PAC-Bayes on Fashion MNIST with $T=2$ in Table \ref{tab:detail-rpb2-fmnist}, $T=4$ in Table \ref{tab:detail-rpb4-fmnist}, and $T=6$ in Table \ref{tab:detail-rpb6-fmnist}.

\begin{table}[h]
\caption{Insight into the training process of the Recursive PAC-Bayes for $T=2$ on MNIST. %The table shows the evolution of $\Ex_t(\pi_t^*,\gamma_t)$, $B_t(\pi_t^*)$, and other quantities as the training progresses with $t$.
}
\centering
\begin{tabular}{c|ccccc|c}
%  &  & Ex. Loss &  & Ex. Risk & Bound &  \\ 
$t$  & $\nval_t$ & $\E_{\pi_t}[\hat F_{\gamma_t}(h,\Uval_t\circ\hat \pi_{t-1})]$ & $\frac{\KL(\pi_t^*\|\pi_{t-1}^*)}{\nval_t}$ & $\Ex_t(\pi_t^*,\gamma_t)$ & $B_t(\pi^*_t)$ & Test 0-1 \\ \hline
1 & 60000 &             & .024 (3e-5) &             & .370 (1e-3) & .254 (2e-3) \\
2 & 30000 & .013 (3e-3) & .024 (1e-4) & .136 (3e-3) & .321 (3e-3) & .139 (3e-3) 
\end{tabular}
\label{tab:detail-rpb2-mnist}
\end{table}

\begin{table}[h]
\caption{Insight into the training process of the Recursive PAC-Bayes for $T=4$ on MNIST. %The table shows the evolution of $\Ex_t(\pi_t^*,\gamma_t)$, $B_t(\pi_t^*)$, and other quantities as the training progresses with $t$.
}
\centering
\begin{tabular}{c|ccccc|c}
%  &  & Ex. Loss &  & Ex. Risk & Bound &  \\ 
$t$  & $\nval_t$ & $\E_{\pi_t}[\hat F_{\gamma_t}(h,\Uval_t\circ\hat \pi_{t-1})]$ & $\frac{\KL(\pi_t^*\|\pi_{t-1}^*)}{\nval_t}$ & $\Ex_t(\pi_t^*,\gamma_t)$ & $B_t(\pi^*_t)$ & Test 0-1 \\ \hline
1 & 60000 &             & .023 (8e-5) &             & .374 (1e-3) & .258 (1e-3) \\
2 & 52500 & -4e-4 (1e-3) & .025 (3e-4) & .118 (1e-3) & .305 (1e-3) & .126 (2e-3)  \\
3 & 45000 & .053 (1e-3) & .002 (9e-5) & .087 (2e-3) & .240 (2e-3) & .114 (1e-3)  \\
4 & 30000 & .054 (1e-3) & .001 (2e-5) & .083 (1e-3) & .203 (8e-4) & .109 (1e-3)  
\end{tabular}
\label{tab:detail-rpb4-mnist}
\end{table}

\begin{table}[H]
\caption{Insight into the training process of the Recursive PAC-Bayes for $T=6$ on MNIST.}
\centering
\begin{tabular}{c|ccccc|c}
%  &  & Ex. Loss &  & Ex. Risk & Bound &  \\ 
$t$  & $\nval_t$ & $\E_{\pi_t}[\hat F_{\gamma_t}(h,\Uval_t\circ\hat \pi_{t-1})]$ & $\frac{\KL(\pi_t^*\|\pi_{t-1}^*)}{\nval_t}$ & $\Ex_t(\pi_t^*,\gamma_t)$ & $B_t(\pi^*_t)$ & Test 0-1 \\ \hline
1 & 60000 &             & .019 (7e-5) &             & .425 (1e-3) & .311 (3e-3) \\
2 & 58125 & -0.013 (1e-3) & .032 (6e-4) & .128 (2e-3) & .341 (3e-3) & .139 (1e-3)  \\
3 & 56250 & .050 (1e-3) & .003 (1e-4) & .093 (6e-4) & .264 (1e-3) & .117 (2e-3)  \\
4 & 52500 & .051 (1e-3) & .001 (6e-5) & .080 (9e-4) & .212 (5e-4) & .108 (2e-3)  \\
5 & 45000 & .051 (1e-3) & 9e-4 (3e-5) & .076 (2e-3) & .182 (1e-3) & .104 (6e-4)  \\
6 & 30000 & .049 (1e-3) & 7e-4 (3e-5) & .074 (1e-3) & .166 (1e-3) & .101 (1e-3) 
\end{tabular}
\label{tab:detail-rpb6-mnist}
\end{table}

\begin{table}[h]
\caption{Insight into the training process of the Recursive PAC-Bayes for $T=2$ on Fashion MNIST. %The table shows the evolution of $\Ex_t(\pi_t^*,\gamma_t)$, $B_t(\pi_t^*)$, and other quantities as the training progresses with $t$.
}
\centering
\begin{tabular}{c|ccccc|c}
%  &  & Ex. Loss &  & Ex. Risk & Bound &  \\ 
$t$  & $\nval_t$ & $\E_{\pi_t}[\hat F_{\gamma_t}(h,\Uval_t\circ\hat \pi_{t-1})]$ & $\frac{\KL(\pi_t^*\|\pi_{t-1}^*)}{\nval_t}$ & $\Ex_t(\pi_t^*,\gamma_t)$ & $B_t(\pi^*_t)$ & Test 0-1 \\ \hline
1 & 60000 &             & .011 (3e-5) &             & .466 (1e-3) & .389 (5e-3) \\
2 & 30000 & .064 (3e-3) & .013 (2e-4) & .171 (4e-3) & .404 (3e-3) & .266 (5e-3)
\end{tabular}
\label{tab:detail-rpb2-fmnist}
\end{table}

\begin{table}[h]
\caption{Insight into the training process of the Recursive PAC-Bayes for $T=4$ on Fashion MNIST.% The table shows the evolution of $\Ex_t(\pi_t^*,\gamma_t)$, $B_t(\pi_t^*)$, and other quantities as the training progresses with $t$.
}
\centering
\begin{tabular}{c|ccccc|c}
%  &  & Ex. Loss &  & Ex. Risk & Bound &  \\ 
$t$  & $\nval_t$ & $\E_{\pi_t}[\hat F_{\gamma_t}(h,\Uval_t\circ\hat \pi_{t-1})]$ & $\frac{\KL(\pi_t^*\|\pi_{t-1}^*)}{\nval_t}$ & $\Ex_t(\pi_t^*,\gamma_t)$ & $B_t(\pi^*_t)$ & Test 0-1 \\ \hline
1 & 60000 &             & .011 (1e-4) &             & .476 (4e-3) & .397 (6e-3) \\
2 & 52500 & .032 (6e-4) & .017 (9e-4) & .147 (2e-3) & .386 (3e-3) & .240 (4e-3)  \\
3 & 45000 & .100 (3e-3) & 3e-3 (1e-4) & .138 (5e-3) & .331 (4e-3) & .222 (5e-3)  \\
4 & 30000 & .095 (2e-3) & 7e-4 (5e-5) & .128 (2e-3) & .293 (1e-3) & .213 (3e-3)  
\end{tabular}
\label{tab:detail-rpb4-fmnist}
\end{table}

\begin{table}[H]
\caption{Insight into the training process of the Recursive PAC-Bayes for $T=6$ on Fashion MNIST.}
\centering
\begin{tabular}{c|ccccc|c}
%  &  & Ex. Loss &  & Ex. Risk & Bound &  \\ 
$t$  & $\nval_t$ & $\E_{\pi_t}[\hat F_{\gamma_t}(h,\Uval_t\circ\hat \pi_{t-1})]$ & $\frac{\KL(\pi_t^*\|\pi_{t-1}^*)}{\nval_t}$ & $\Ex_t(\pi_t^*,\gamma_t)$ & $B_t(\pi^*_t)$ & Test 0-1 \\ \hline
1 & 60000 &             & 9e-3 (8e-5) &             & .534 (4e-3) & .462 (5e-3)  \\
2 & 58125 & .013 (6e-3) & .023 (1e-3) & .151 (4e-3) & .418 (5e-3) & .254 (6e-3)  \\
3 & 56250 & .091 (3e-3) & 3e-3 (4e-4) & .141 (2e-3) & .350 (3e-3) & .223 (1e-3)  \\
4 & 52500 & .090 (2e-3) & 9e-4 (9e-5) & .121 (1e-3) & .296 (1e-3) & .207 (1e-3)  \\
5 & 45000 & .090 (1e-3) & 6e-4 (3e-5) & .117 (2e-3) & .265 (2e-3) & .199 (2e-3)   \\
6 & 30000 & .093 (1e-3) & 5e-4 (2e-5) & .122 (1e-3) & .255 (1e-3) & .198 (1e-3)  
\end{tabular}
\label{tab:detail-rpb6-fmnist}
\end{table}

\subsubsection{Uninformed priors}
We report the additional results of uninformed priors \citep{McA98} on MNIST and Fashion MNIST in Table \ref{tab:detail-uninformed}. As described earlier in Section \ref{sec:idea}, Section \ref{sec:experiments}, and Section \ref{app:sec:other-exp-details}, we evaluate the bound using the entire training set.

\begin{table}[H]
    \caption{Further details to compute the bound for the uninformed prior approach on MNIST and Fashion MNIST.}
    \centering
    \begin{tabular}{l|ccc|c}
          & $\E_\rho[\hat L(h,S)]$ & $\frac{\KL(\rho\|\pi_0)}{n}$ & Bound &  Test 0-1 \\
         \hline
    MNIST  & .343 (2e-3) & .023 (4e-5) & .457 (2e-3) &  .335 (3e-3) \\
    F-MNIST  & .382 (2e-3) & .011 (8e-6) & .464 (2e-3) & .384 (5e-3)
    \end{tabular}
    \label{tab:detail-uninformed}
\end{table}

\subsubsection{Data-informed priors}
We report the additional results of data-informed priors \citep{APS07} on MNIST and Fashion MNIST in Table \ref{tab:detail-informed}.  As described earlier in Section \ref{sec:idea}, Section \ref{sec:experiments}, and Section \ref{app:sec:other-exp-details}, we evaluate the bound using $S_2$ that is independent of the data-informed prior $\pi_1$.

\begin{table}[H]
    \caption{Further details to compute the bound for the data-informed prior on MNIST and Fashion MNIST.}
    \centering
    \begin{tabular}{l|ccc|c}
         & $\E_\rho[\hat L(h,S_2)]$ & $\frac{\KL(\rho\|\pi_0)}{|S_2|}$ & Bound & Test 0-1 \\
         \hline
    MNIST  & .376 (8e-4) & 8e-4 (9e-6) & .408 (9e-4) & .371 (6e-3) \\
    F-MNIST  & .412 (1e-3) & 4e-4 (7e-6) & .440 (1e-3) & .413 (6e-3)
    \end{tabular}
    \label{tab:detail-informed}
\end{table}

\subsubsection{Data-informed priors + excess loss} 
We report the additional results of data-informed priors + excess loss \citep{MGG20} on MNIST and Fashion MNIST in Table \ref{tab:detail-informedexcess-general} and \ref{tab:detail-informedexcess-bounds}.  As described earlier in Section \ref{sec:idea}, Section \ref{sec:experiments}, and Section \ref{app:sec:other-exp-details}, we evaluate the bound using $S_2$ that is independent of the data-informed prior $\pi_1$ and the reference prediction rule $h^*$. The bound is composed of two parts: a bound on the excess loss of $\rho$ with respect to $h^*$ (Excess bound) and a single hypothesis bound on $h^*$ ($h^*$ bound). We report the two components of the bound in Table \ref{tab:detail-informedexcess-general}. We provide further details to compute these bounds from the losses of their corresponding quantities in Table \ref{tab:detail-informedexcess-bounds}. 

\begin{table}[H]
    \caption{Details to compute the bound for the data-informed prior and excess loss on MNIST and Fashion MNIST. The table shows the bound on the excess loss of $\rho$ with respect to $h^*$ (Excess bound) and a single hypothesis bound on $h^*$ ($h^*$ bound).}
    \centering
    \begin{tabular}{l|ccc|c}
          & Ex. Bound & $h^*$ Bound & Bound & Test 0-1 \\
         \hline
        MNIST & .162 (1e-3) & .029 (4e-4) & .192 (2e-3) & .151 (3e-3) \\
        F-MNIST & .196 (5e-3) & .145 (1e-3) & .342 (6e-3) & .285 (5e-3)
    \end{tabular}
    \label{tab:detail-informedexcess-general}
\end{table}

\begin{table}[H]
    \caption{Further details to compute the bound for the data-informed prior and excess loss on MNIST and Fashion MNIST. The table shows the empirical excess loss $\E_\rho[\hat \Delta(h,h^*,S_2)]$, where we define $\Delta(h,h^*,S_2)=\hat L(h,S_2)-\hat L(h^*,S_2)$, and its bound (Excess Bound). It also shows the empirical loss of the reference prediction rule $\hat L(h^*,S_2)$ and its bound. The computation of such bound does not involve the $\KL$ term.}
    \centering
    \begin{tabular}{l|ccc|cc|c}
         & $\E_\rho[\hat \Delta(h,h^*,S_2)]$ & $\frac{\KL(\rho\|\pi_1)}{|S_2|}$ & Ex. Bound & $\hat L(h^*,S_2)$ & $h^*$ Bound & Bound \\
         \hline
        MNIST & -0.011 (3e-3) & .035 (5e-4) & .162 (1e-3) & .026 (4e-4) & .029 (4e-4) & .192 (2e-3)\\
        F-MNIST & .104 (6e-3) & .018 (5e-4) & .196 (5e-3) & .112 (1e-3) & .145 (1e-3) & .342 (6e-3)
    \end{tabular}
    \label{tab:detail-informedexcess-bounds}
\end{table}

\newpage
\section*{NeurIPS Paper Checklist}

\begin{enumerate}

\item {\bf Claims}
    \item[] Question: Do the main claims made in the abstract and introduction accurately reflect the paper's contributions and scope?
    \item[] Answer: \answerYes{}%, \answerNo{}, or \answerNA{}.
    \item[] Justification: We tried to write an abstract and an introduction that reflected the rest of the paper as accurately as possible. %\justificationTODO{}
    \item[] Guidelines:
    \begin{itemize}
        \item The answer NA means that the abstract and introduction do not include the claims made in the paper.
        \item The abstract and/or introduction should clearly state the claims made, including the contributions made in the paper and important assumptions and limitations. A No or NA answer to this question will not be perceived well by the reviewers. 
        \item The claims made should match theoretical and experimental results, and reflect how much the results can be expected to generalize to other settings. 
        \item It is fine to include aspirational goals as motivation as long as it is clear that these goals are not attained by the paper. 
    \end{itemize}

\item {\bf Limitations}
    \item[] Question: Does the paper discuss the limitations of the work performed by the authors?
    \item[] Answer: \answerYes{} % Replace by \answerYes{}, \answerNo{}, or \answerNA{}.
    \item[] Justification: We have tried to be as honest as possible in setting the framework of this theoretical work and have explicitly mentioned the limiting assumptions of the paper, e.g.\ i.i.d.\ data.
    \item[] Guidelines:
    \begin{itemize}
        \item The answer NA means that the paper has no limitation while the answer No means that the paper has limitations, but those are not discussed in the paper. 
        \item The authors are encouraged to create a separate "Limitations" section in their paper.
        \item The paper should point out any strong assumptions and how robust the results are to violations of these assumptions (e.g., independence assumptions, noiseless settings, model well-specification, asymptotic approximations only holding locally). The authors should reflect on how these assumptions might be violated in practice and what the implications would be.
        \item The authors should reflect on the scope of the claims made, e.g., if the approach was only tested on a few datasets or with a few runs. In general, empirical results often depend on implicit assumptions, which should be articulated.
        \item The authors should reflect on the factors that influence the performance of the approach. For example, a facial recognition algorithm may perform poorly when image resolution is low or images are taken in low lighting. Or a speech-to-text system might not be used reliably to provide closed captions for online lectures because it fails to handle technical jargon.
        \item The authors should discuss the computational efficiency of the proposed algorithms and how they scale with dataset size.
        \item If applicable, the authors should discuss possible limitations of their approach to address problems of privacy and fairness.
        \item While the authors might fear that complete honesty about limitations might be used by reviewers as grounds for rejection, a worse outcome might be that reviewers discover limitations that aren't acknowledged in the paper. The authors should use their best judgment and recognize that individual actions in favor of transparency play an important role in developing norms that preserve the integrity of the community. Reviewers will be specifically instructed to not penalize honesty concerning limitations.
    \end{itemize}

\item {\bf Theory Assumptions and Proofs}
    \item[] Question: For each theoretical result, does the paper provide the full set of assumptions and a complete (and correct) proof?
    \item[] Answer: \answerYes{} % Replace by \answerYes{}, \answerNo{}, or \answerNA{}.
    \item[] Justification: We have clearly stated the assumptions made in the paper, and our theoretical results are accompanied by detailed proofs.
    \item[] Guidelines:
    \begin{itemize}
        \item The answer NA means that the paper does not include theoretical results. 
        \item All the theorems, formulas, and proofs in the paper should be numbered and cross-referenced.
        \item All assumptions should be clearly stated or referenced in the statement of any theorems.
        \item The proofs can either appear in the main paper or the supplemental material, but if they appear in the supplemental material, the authors are encouraged to provide a short proof sketch to provide intuition. 
        \item Inversely, any informal proof provided in the core of the paper should be complemented by formal proofs provided in appendix or supplemental material.
        \item Theorems and Lemmas that the proof relies upon should be properly referenced. 
    \end{itemize}

    \item {\bf Experimental Result Reproducibility}
    \item[] Question: Does the paper fully disclose all the information needed to reproduce the main experimental results of the paper to the extent that it affects the main claims and/or conclusions of the paper (regardless of whether the code and data are provided or not)?
    \item[] Answer: \answerYes{} % Replace by \answerYes{}, \answerNo{}, or \answerNA{}.
    \item[] Justification: The datasets are available online and we have tried to provide as much detail as possible to make our experiments transparent and reproducible. 
    \item[] Guidelines:
    \begin{itemize}
        \item The answer NA means that the paper does not include experiments.
        \item If the paper includes experiments, a No answer to this question will not be perceived well by the reviewers: Making the paper reproducible is important, regardless of whether the code and data are provided or not.
        \item If the contribution is a dataset and/or model, the authors should describe the steps taken to make their results reproducible or verifiable. 
        \item Depending on the contribution, reproducibility can be accomplished in various ways. For example, if the contribution is a novel architecture, describing the architecture fully might suffice, or if the contribution is a specific model and empirical evaluation, it may be necessary to either make it possible for others to replicate the model with the same dataset, or provide access to the model. In general. releasing code and data is often one good way to accomplish this, but reproducibility can also be provided via detailed instructions for how to replicate the results, access to a hosted model (e.g., in the case of a large language model), releasing of a model checkpoint, or other means that are appropriate to the research performed.
        \item While NeurIPS does not require releasing code, the conference does require all submissions to provide some reasonable avenue for reproducibility, which may depend on the nature of the contribution. For example
        \begin{enumerate}
            \item If the contribution is primarily a new algorithm, the paper should make it clear how to reproduce that algorithm.
            \item If the contribution is primarily a new model architecture, the paper should describe the architecture clearly and fully.
            \item If the contribution is a new model (e.g., a large language model), then there should either be a way to access this model for reproducing the results or a way to reproduce the model (e.g., with an open-source dataset or instructions for how to construct the dataset).
            \item We recognize that reproducibility may be tricky in some cases, in which case authors are welcome to describe the particular way they provide for reproducibility. In the case of closed-source models, it may be that access to the model is limited in some way (e.g., to registered users), but it should be possible for other researchers to have some path to reproducing or verifying the results.
        \end{enumerate}
    \end{itemize}

\item {\bf Open access to data and code}
    \item[] Question: Does the paper provide open access to the data and code, with sufficient instructions to faithfully reproduce the main experimental results, as described in supplemental material?
    \item[] Answer: \answerYes{} % Replace by \answerYes{}, \answerNo{}, or \answerNA{}.
    \item[] Justification: The code has been uploaded, the datasets are open source. %\justificationTODO{}
    \item[] Guidelines:
    \begin{itemize}
        \item The answer NA means that paper does not include experiments requiring code.
        \item Please see the NeurIPS code and data submission guidelines (\url{https://nips.cc/public/guides/CodeSubmissionPolicy}) for more details.
        \item While we encourage the release of code and data, we understand that this might not be possible, so “No” is an acceptable answer. Papers cannot be rejected simply for not including code, unless this is central to the contribution (e.g., for a new open-source benchmark).
        \item The instructions should contain the exact command and environment needed to run to reproduce the results. See the NeurIPS code and data submission guidelines (\url{https://nips.cc/public/guides/CodeSubmissionPolicy}) for more details.
        \item The authors should provide instructions on data access and preparation, including how to access the raw data, preprocessed data, intermediate data, and generated data, etc.
        \item The authors should provide scripts to reproduce all experimental results for the new proposed method and baselines. If only a subset of experiments are reproducible, they should state which ones are omitted from the script and why.
        \item At submission time, to preserve anonymity, the authors should release anonymized versions (if applicable).
        \item Providing as much information as possible in supplemental material (appended to the paper) is recommended, but including URLs to data and code is permitted.
    \end{itemize}

\item {\bf Experimental Setting/Details}
    \item[] Question: Does the paper specify all the training and test details (e.g., data splits, hyperparameters, how they were chosen, type of optimizer, etc.) necessary to understand the results?
    \item[] Answer: \answerYes{} % Replace by \answerYes{}, \answerNo{}, or \answerNA{}.
    \item[] Justification: We have introduced many different notations to accurately specify the different data splits and parameters used in our framework, and have tried to detail the optimization procedure to make our results as understandable as possible. 
    \item[] Guidelines:
    \begin{itemize}
        \item The answer NA means that the paper does not include experiments.
        \item The experimental setting should be presented in the core of the paper to a level of detail that is necessary to appreciate the results and make sense of them.
        \item The full details can be provided either with the code, in appendix, or as supplemental material.
    \end{itemize}

\item {\bf Experiment Statistical Significance}
    \item[] Question: Does the paper report error bars suitably and correctly defined or other appropriate information about the statistical significance of the experiments?
    \item[] Answer: \answerYes{}{} % Replace by \answerYes{}, \answerNo{}, or \answerNA{}.
    \item[] Justification: We provided for each experiment the corresponding standard deviations over different runs of the experiments. Please refer to the tables in the paper for more details.
    \item[] Guidelines:
    \begin{itemize}
        \item The answer NA means that the paper does not include experiments.
        \item The authors should answer "Yes" if the results are accompanied by error bars, confidence intervals, or statistical significance tests, at least for the experiments that support the main claims of the paper.
        \item The factors of variability that the error bars are capturing should be clearly stated (for example, train/test split, initialization, random drawing of some parameter, or overall run with given experimental conditions).
        \item The method for calculating the error bars should be explained (closed form formula, call to a library function, bootstrap, etc.)
        \item The assumptions made should be given (e.g., Normally distributed errors).
        \item It should be clear whether the error bar is the standard deviation or the standard error of the mean.
        \item It is OK to report 1-sigma error bars, but one should state it. The authors should preferably report a 2-sigma error bar than state that they have a 96\% CI, if the hypothesis of Normality of errors is not verified.
        \item For asymmetric distributions, the authors should be careful not to show in tables or figures symmetric error bars that would yield results that are out of range (e.g. negative error rates).
        \item If error bars are reported in tables or plots, The authors should explain in the text how they were calculated and reference the corresponding figures or tables in the text.
    \end{itemize}

\item {\bf Experiments Compute Resources}
    \item[] Question: For each experiment, does the paper provide sufficient information on the computer resources (type of compute workers, memory, time of execution) needed to reproduce the experiments?
    \item[] Answer: \answerYes{} % Replace by \answerYes{}, \answerNo{}, or \answerNA{}.
    \item[] Justification: As mentioned in the paper, all the experiments were run on a personal laptop.
    \item[] Guidelines:
    \begin{itemize}
        \item The answer NA means that the paper does not include experiments.
        \item The paper should indicate the type of compute workers CPU or GPU, internal cluster, or cloud provider, including relevant memory and storage.
        \item The paper should provide the amount of compute required for each of the individual experimental runs as well as estimate the total compute. 
        \item The paper should disclose whether the full research project required more compute than the experiments reported in the paper (e.g., preliminary or failed experiments that didn't make it into the paper). 
    \end{itemize}
    
\item {\bf Code Of Ethics}
    \item[] Question: Does the research conducted in the paper conform, in every respect, with the NeurIPS Code of Ethics \url{https://neurips.cc/public/EthicsGuidelines}?
    \item[] Answer: \answerYes{} % Replace by \answerYes{}, \answerNo{}, or \answerNA{}.
    \item[] Justification: We have not violated any of the points in the NeurIPS Code of Ethics.
    \item[] Guidelines:
    \begin{itemize}
        \item The answer NA means that the authors have not reviewed the NeurIPS Code of Ethics.
        \item If the authors answer No, they should explain the special circumstances that require a deviation from the Code of Ethics.
        \item The authors should make sure to preserve anonymity (e.g., if there is a special consideration due to laws or regulations in their jurisdiction).
    \end{itemize}

\item {\bf Broader Impacts}
    \item[] Question: Does the paper discuss both potential positive societal impacts and negative societal impacts of the work performed?
    \item[] Answer: \answerNA{}{} % Replace by \answerYes{}, \answerNo{}, or \answerNA{}.
    \item[] Justification: This is a theoretical paper with no specific societal impact.
    \item[] Guidelines:
    \begin{itemize}
        \item The answer NA means that there is no societal impact of the work performed.
        \item If the authors answer NA or No, they should explain why their work has no societal impact or why the paper does not address societal impact.
        \item Examples of negative societal impacts include potential malicious or unintended uses (e.g., disinformation, generating fake profiles, surveillance), fairness considerations (e.g., deployment of technologies that could make decisions that unfairly impact specific groups), privacy considerations, and security considerations.
        \item The conference expects that many papers will be foundational research and not tied to particular applications, let alone deployments. However, if there is a direct path to any negative applications, the authors should point it out. For example, it is legitimate to point out that an improvement in the quality of generative models could be used to generate deepfakes for disinformation. On the other hand, it is not needed to point out that a generic algorithm for optimizing neural networks could enable people to train models that generate Deepfakes faster.
        \item The authors should consider possible harms that could arise when the technology is being used as intended and functioning correctly, harms that could arise when the technology is being used as intended but gives incorrect results, and harms following from (intentional or unintentional) misuse of the technology.
        \item If there are negative societal impacts, the authors could also discuss possible mitigation strategies (e.g., gated release of models, providing defenses in addition to attacks, mechanisms for monitoring misuse, mechanisms to monitor how a system learns from feedback over time, improving the efficiency and accessibility of ML).
    \end{itemize}
    
\item {\bf Safeguards}
    \item[] Question: Does the paper describe safeguards that have been put in place for responsible release of data or models that have a high risk for misuse (e.g., pretrained language models, image generators, or scraped datasets)?
    \item[] Answer: \answerNA{} % Replace by \answerYes{}, \answerNo{}, or \answerNA{}.
    \item[] Justification: The paper poses no such risks.
    \item[] Guidelines:
    \begin{itemize}
        \item The answer NA means that the paper poses no such risks.
        \item Released models that have a high risk for misuse or dual-use should be released with necessary safeguards to allow for controlled use of the model, for example by requiring that users adhere to usage guidelines or restrictions to access the model or implementing safety filters. 
        \item Datasets that have been scraped from the Internet could pose safety risks. The authors should describe how they avoided releasing unsafe images.
        \item We recognize that providing effective safeguards is challenging, and many papers do not require this, but we encourage authors to take this into account and make a best faith effort.
    \end{itemize}

\item {\bf Licenses for existing assets}
    \item[] Question: Are the creators or original owners of assets (e.g., code, data, models), used in the paper, properly credited and are the license and terms of use explicitly mentioned and properly respected?
    \item[] Answer: \answerYes{} % Replace by \answerYes{}, \answerNo{}, or \answerNA{}.
    \item[] Justification: The open source datasets we used in the paper were properly cited, namely MNIST \citep{lecun-mnisthandwrittendigit-2010} and Fashion MNIST \citep{xiao2017fashion}.
    \item[] Guidelines:
    \begin{itemize}
        \item The answer NA means that the paper does not use existing assets.
        \item The authors should cite the original paper that produced the code package or dataset.
        \item The authors should state which version of the asset is used and, if possible, include a URL.
        \item The name of the license (e.g., CC-BY 4.0) should be included for each asset.
        \item For scraped data from a particular source (e.g., website), the copyright and terms of service of that source should be provided.
        \item If assets are released, the license, copyright information, and terms of use in the package should be provided. For popular datasets, \url{paperswithcode.com/datasets} has curated licenses for some datasets. Their licensing guide can help determine the license of a dataset.
        \item For existing datasets that are re-packaged, both the original license and the license of the derived asset (if it has changed) should be provided.
        \item If this information is not available online, the authors are encouraged to reach out to the asset's creators.
    \end{itemize}

\item {\bf New Assets}
    \item[] Question: Are new assets introduced in the paper well documented and is the documentation provided alongside the assets?
    \item[] Answer: \answerNA{} % Replace by \answerYes{}, \answerNo{}, or \answerNA{}.
    \item[] Justification:  We have not released any new assets.
    \item[] Guidelines:
    \begin{itemize}
        \item The answer NA means that the paper does not release new assets.
        \item Researchers should communicate the details of the dataset/code/model as part of their submissions via structured templates. This includes details about training, license, limitations, etc. 
        \item The paper should discuss whether and how consent was obtained from people whose asset is used.
        \item At submission time, remember to anonymize your assets (if applicable). You can either create an anonymized URL or include an anonymized zip file.
    \end{itemize}

\item {\bf Crowdsourcing and Research with Human Subjects}
    \item[] Question: For crowdsourcing experiments and research with human subjects, does the paper include the full text of instructions given to participants and screenshots, if applicable, as well as details about compensation (if any)? 
    \item[] Answer: \answerNA{} % Replace by \answerYes{}, \answerNo{}, or \answerNA{}.
    \item[] Justification: Our paper did not involve crowdsourcing nor research with human subjects.
    \item[] Guidelines:
    \begin{itemize}
        \item The answer NA means that the paper does not involve crowdsourcing nor research with human subjects.
        \item Including this information in the supplemental material is fine, but if the main contribution of the paper involves human subjects, then as much detail as possible should be included in the main paper. 
        \item According to the NeurIPS Code of Ethics, workers involved in data collection, curation, or other labor should be paid at least the minimum wage in the country of the data collector. 
    \end{itemize}

\item {\bf Institutional Review Board (IRB) Approvals or Equivalent for Research with Human Subjects}
    \item[] Question: Does the paper describe potential risks incurred by study participants, whether such risks were disclosed to the subjects, and whether Institutional Review Board (IRB) approvals (or an equivalent approval/review based on the requirements of your country or institution) were obtained?
    \item[] Answer: \answerNA{} % Replace by \answerYes{}, \answerNo{}, or \answerNA{}.
    \item[] Justification: Our paper did not involve crowdsourcing nor research with human subjects.
    \item[] Guidelines:
    \begin{itemize}
        \item The answer NA means that the paper does not involve crowdsourcing nor research with human subjects.
        \item Depending on the country in which research is conducted, IRB approval (or equivalent) may be required for any human subjects research. If you obtained IRB approval, you should clearly state this in the paper. 
        \item We recognize that the procedures for this may vary significantly between institutions and locations, and we expect authors to adhere to the NeurIPS Code of Ethics and the guidelines for their institution. 
        \item For initial submissions, do not include any information that would break anonymity (if applicable), such as the institution conducting the review.
    \end{itemize}

\end{enumerate}

\end{document}